\title{DeepSphere: a graph-based spherical CNN}
\author{Michaël Defferrard, Martino Milani \& Frédérick Gusset \\
École Polytechnique Fédérale de Lausanne (EPFL), Switzerland \\
\texttt{\{michael.defferrard,martino.milani,frederick.gusset\}@epfl.ch}
\AND
Nathanaël Perraudin \\
Swiss Data Science Center (SDSC), Switzerland \\
\texttt{nathanael.perraudin@sdsc.ethz.ch}
}
\newtheorem{theorem}{Theorem}[section]
\newtheorem{prop}{Proposition}
\newcommand{\norm}[1]{\left\lVert#1\right\rVert}
\renewcommand{\b}[1]{{\bm{#1}}}  
\newcommand{\bO}{\mathcal{O}}
\newcommand{\R}{\mathbb{R}}
\renewcommand{\S}{\mathbb{S}}
\newcommand{\G}{\mathcal{G}}  
\newcommand{\V}{\mathcal{V}}  
\newcommand{\laplacian}{\Delta_{\mathbb S^2}}
\newcommand{\Rg}{R(g)}
\newcommand{\seminorm}[1]{\left|#1\right|}
\newcommand{\Ln}{\hat L_n^{t_n}}
\newcommand{\figref}[1]{figure~\ref{fig:#1}}
\newcommand{\Figref}[1]{Figure~\ref{fig:#1}}
\newcommand{\tabref}[1]{table~\ref{tab:#1}}
\newcommand{\Tabref}[1]{Table~\ref{tab:#1}}
\newcommand{\secref}[1]{section~\ref{sec:#1}}
\newcommand{\Secref}[1]{Section~\ref{sec:#1}}
\newcommand{\eqnref}[1]{(\ref{eqn:#1})}
\newcommand{\linefrac}[2]{
    {#1/#2}
}
\begin{document}

\maketitle

\begin{abstract}
Designing a convolution for a spherical neural network requires a delicate tradeoff between efficiency and rotation equivariance.
DeepSphere, a method based on a graph representation of the sampled sphere, strikes a controllable balance between these two desiderata.
This contribution is twofold.
First, we study both theoretically and empirically how equivariance is affected by the underlying graph with respect to the number of vertices and neighbors.
Second, we evaluate DeepSphere on relevant problems.
Experiments show state-of-the-art performance and demonstrates the efficiency and flexibility of this formulation.
Perhaps surprisingly, comparison with previous work suggests that anisotropic filters might be an unnecessary price to pay.
Our code is available at \url{https://github.com/deepsphere}.
\end{abstract}

\section{Introduction}

Spherical data is found in many applications (\figref{examples}).
Planetary data (such as meteorological or geological measurements) and brain activity are example of intrinsically spherical data.
The observation of the universe, LIDAR scans, and the digitalization of 3D objects are examples of projections due to observation.
Labels or variables are often to be inferred from them.
Examples are the inference of cosmological parameters from the distribution of mass in the universe \citep{perraudin2019deepspherecosmo}, the segmentation of omnidirectional images \citep{khasanova2017sphericalcnn}, and the segmentation of cyclones from Earth observation \citep{mudigonda2017climateevents}.

\begin{figure}[h]
	\centering
	\subfloat[]{
		\includegraphics[height=0.20\linewidth,valign=t]{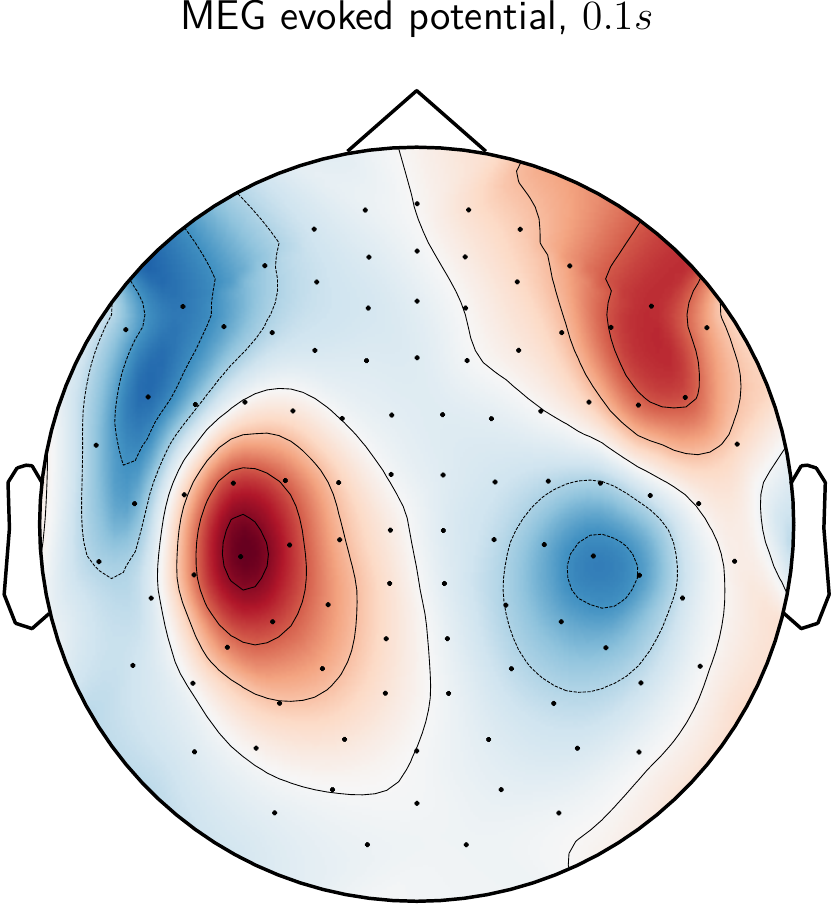}
	} \hspace{0.1em}
	\subfloat[]{
		\includegraphics[height=0.20\linewidth,valign=t]{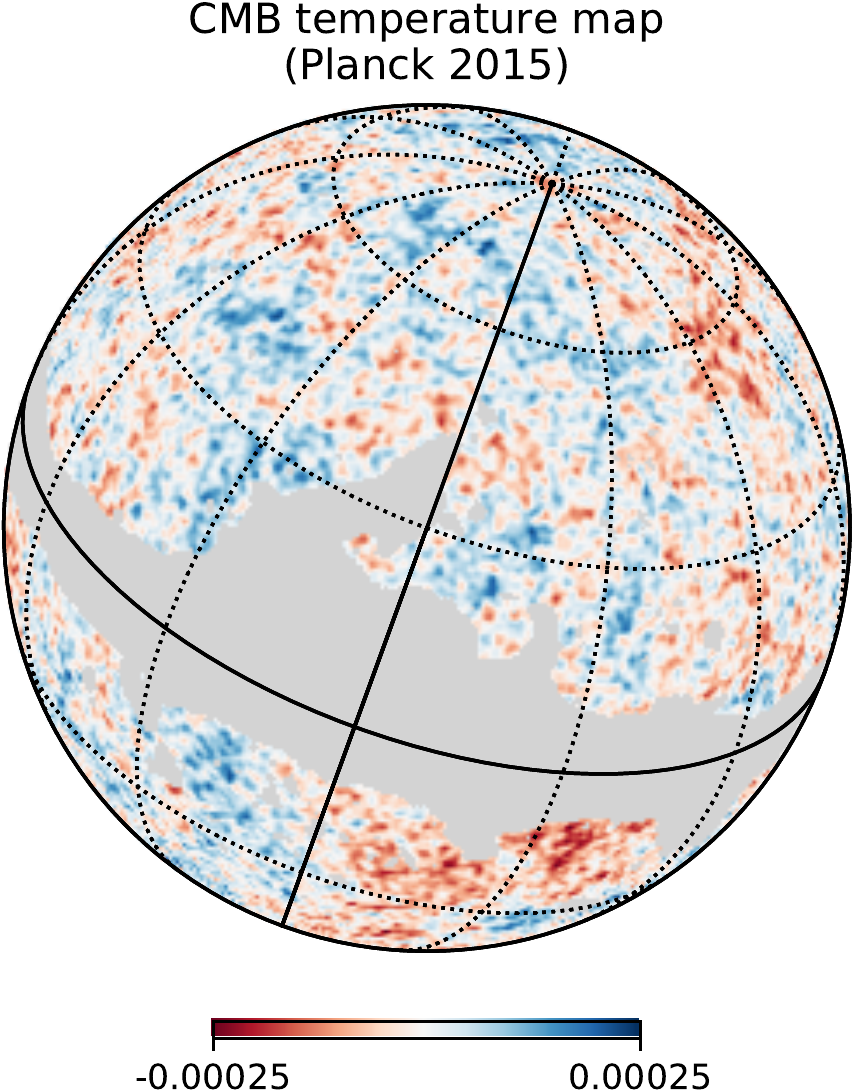}
	} \hfill
	\subfloat[]{
		\includegraphics[height=0.20\linewidth,valign=t]{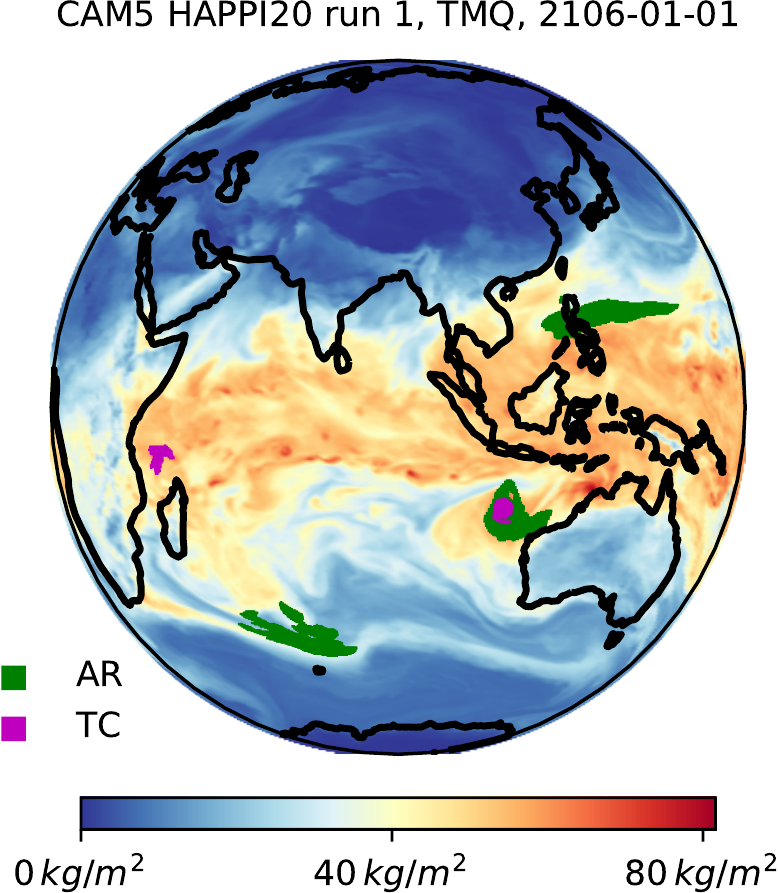}
		\label{fig:examples:climate}
	} \hfill
	\subfloat[]{
		\includegraphics[height=0.20\linewidth,valign=t]{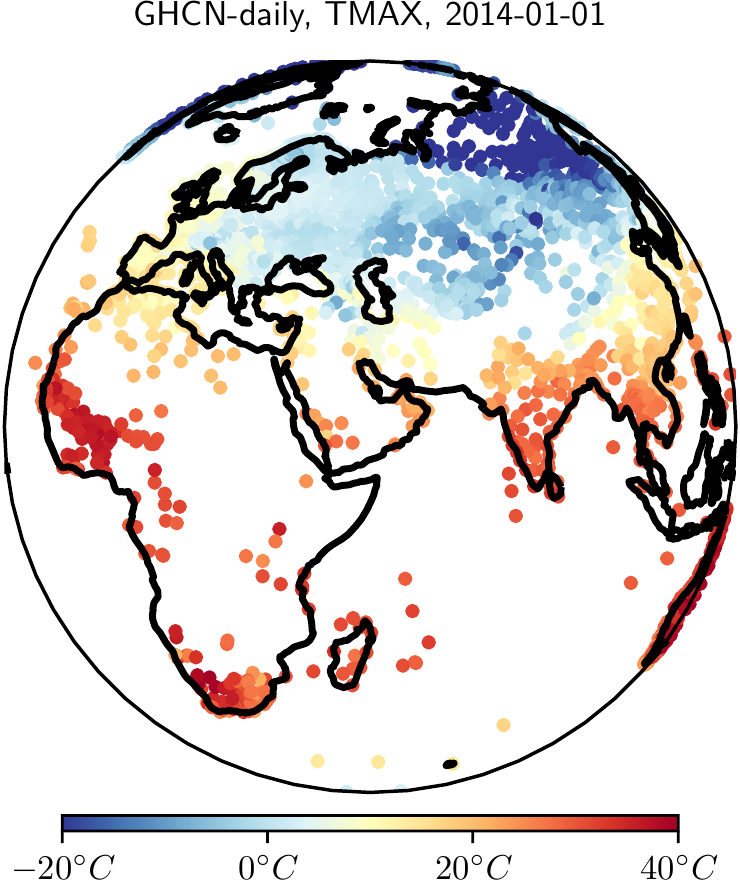}
		\label{fig:examples:ghcn:tmax}
	} \hfill
	\subfloat[]{
		\includegraphics[height=0.20\linewidth,valign=t]{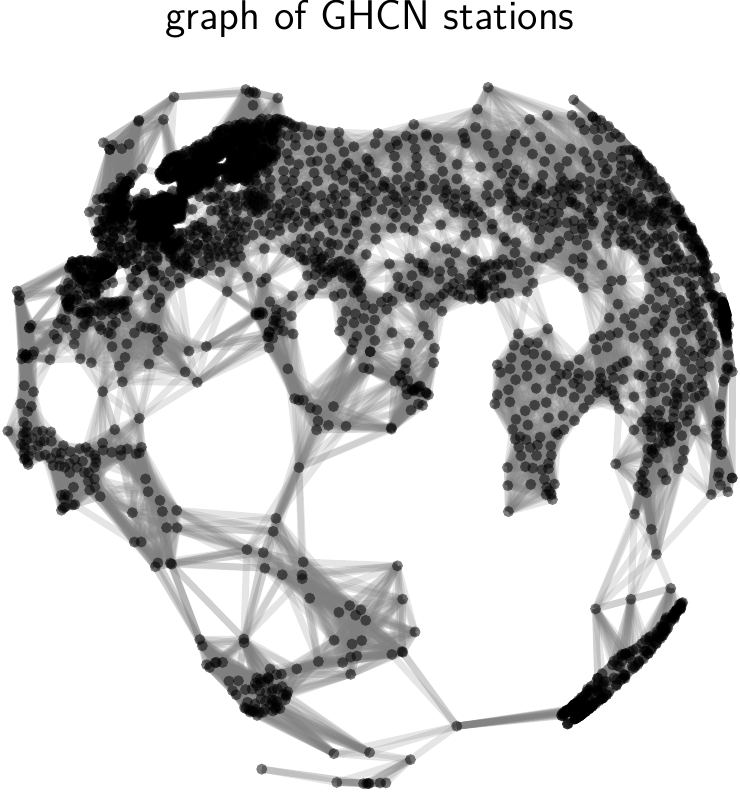}
		\label{fig:examples:ghcn:graph}
	}
	\caption{
		Examples of spherical data:
		(a) brain activity recorded through magnetoencephalography (MEG),\protect\footnotemark
		(b) the cosmic microwave background (CMB) temperature from \citet{planck2015overview},
		(c) hourly precipitation from a climate simulation \citep{jiang2019sphericalcnn}, 
		(d) daily maximum temperature from the Global Historical Climatology Network (GHCN).\protect\footnotemark
		A rigid full-sphere sampling is not ideal: brain activity is only measured on the scalp, the Milky Way's galactic plane masks observations, climate scientists desire a variable resolution, and the position of weather stations is arbitrary and changes over time.
		(e) Graphs can faithfully and efficiently represent sampled spherical data by placing vertices where it matters.
	}
	\label{fig:examples}
\end{figure}
\footnotetext[1]{\scriptsize\url{https://martinos.org/mne/stable/auto_tutorials/plot_visualize_evoked.html}}
\footnotetext[2]{\scriptsize\url{https://www.ncdc.noaa.gov/ghcn-daily-description}}


As neural networks (NNs) have proved to be great tools for inference, variants have been developed to handle spherical data.
Exploiting the locally Euclidean property of the sphere, early attempts used standard 2D convolutions on a grid sampling of the sphere \citep{boomsma2017sphericalcnn, su2017sphericalcnn, coors2018sphericalcnn}.
While simple and efficient, those convolutions are not equivariant to rotations.
On the other side of this tradeoff, \citet{cohen2018sphericalcnn} and \citet{esteves2018sphericalcnn} proposed to perform proper spherical convolutions through the spherical harmonic transform.
While equivariant to rotations, those convolutions are expensive (\secref{method:convolution}).


As a lack of equivariance can penalize performance (\secref{exp:cosmo}) and expensive convolutions prohibit their application to some real-world problems, methods standing between these two extremes are desired.
\citet{cohen2019gauge} proposed to reduce costs by limiting the size of the representation of the symmetry group by projecting the data from the sphere to the icosahedron.
The distortions introduced by this projection might however hinder performance (\secref{exp:climate}).

Another approach is to represent the sampled sphere as a graph connecting pixels according to the distance between them \citep{bruna2013gnn, khasanova2017sphericalcnn, perraudin2019deepspherecosmo}.
While Laplacian-based graph convolutions are more efficient than spherical convolutions, they are not exactly equivariant \citep{defferrard2019deepsphereequiv}.
In this work, we argue that graph-based spherical CNNs strike an interesting balance, with a controllable tradeoff between cost and equivariance (which is linked to performance).
Experiments on multiple problems of practical interest show the competitiveness and flexibility of this approach.

\section{Method}


DeepSphere leverages graph convolutions to achieve the following properties: (i) computational efficiency, (ii) sampling flexibility, and (iii) rotation equivariance (\secref{equivariance}).
The main idea is to model the sampled sphere as a graph of connected pixels: the length of the shortest path between two pixels is an approximation of the geodesic distance between them.
We use the graph CNN formulation introduced in \citep{defferrard2016graphnn} and a pooling strategy that exploits hierarchical samplings of the sphere.

\paragraph{Sampling.}
A sampling scheme $\V = \{x_i \in \S^2\}_{i=1}^n$ is defined to be the discrete subset of the sphere containing the $n$ points where the values of the signals that we want to analyse are known. For a given continuous signal $f$, we represent such values in a vector $\b{f}\in\mathbb R^n$. As there is no analogue of uniform sampling on the sphere, many samplings have been proposed with different tradeoffs.
In this work, depending on the considered application, we will use the equiangular \citep{driscoll1994Fouriersphere}, HEALPix \citep{gorski2005healpix}, and icosahedral \citep{baumgardner1985icosahedral} samplings.

\paragraph{Graph.}
From $\V$, we construct a weighted undirected graph $\G = (\V, w)$, where the elements of $\V$ are the vertices and the weight $w_{ij} = w_{ji}$ is a similarity measure between vertices $x_i$ and $x_j$.
The combinatorial graph Laplacian $\b{L} \in \R^{n \times n}$ is defined as $\b{L} = \b{D} - \b{A}$, where $\b{A} = (w_{ij})$ is the weighted adjacency matrix, $\b{D} = (d_{ii})$ is the diagonal degree matrix, and $d_{ii} = \sum_j w_{ij}$ is the weighted degree of vertex $x_i$.
Given a sampling $\V$, usually fixed by the application or the available measurements, the freedom in constructing $\G$ is in setting $w$.
\Secref{equivariance} shows how to set $w$ to minimize the equivariance error.

\paragraph{Convolution.} \label{sec:method:convolution}
On Euclidean domains, convolutions are efficiently implemented by sliding a window in the signal domain.
On the sphere however, there is no straightforward way to implement a convolution in the signal domain due to non-uniform samplings.
Convolutions are most often performed in the spectral domain through a spherical harmonic transform (SHT).
That is the approach taken by \citet{cohen2018sphericalcnn} and \citet{esteves2018sphericalcnn}, which has a computational cost of $\bO(n^{3/2})$ on isolatitude samplings (such as the HEALPix and equiangular samplings) and $\bO(n^2)$ in general.
On the other hand, following \citet{defferrard2016graphnn}, graph convolutions can be defined as
\begin{equation} \label{eqn:graph_conv}
	h(\b{L}) \b{f} = \left(\sum_{i=0}^P \alpha_i \b{L}^i\right) \b{f},
\end{equation}
where $P$ is the polynomial order (which corresponds to the filter's size) and $\alpha_i$ are the coefficients to be optimized during training.\footnote{In practice, training with Chebyshev polynomials (instead of monomials) is slightly more stable. We believe it to be due to their orthogonality and uniformity.}
Those convolutions are used by \citet{khasanova2017sphericalcnn} and \citet{perraudin2019deepspherecosmo} and cost $\bO(n)$ operations through a recursive application of $\b{L}$.\footnote{As long as the graph is sparsified such that the number of edges, i.e., the number of non-zeros in $\b{A}$, is proportional to the number of vertices $n$. This can always be done as most weights are very small.}

\paragraph{Pooling.}
Down- and up-sampling is natural for hierarchical samplings,\footnote{The equiangular, HEALPix, and icosahedral samplings are of this kind.} where each subdivision divides a pixel in (an equal number of) child sub-pixels.
To pool (down-sample), the data supported on the sub-pixels is summarized by a permutation invariant function such as the maximum or the average.
To unpool (up-sample), the data supported on a pixel is copied to all its sub-pixels.

\paragraph{Architecture.}
All our NNs are fully convolutional, and employ a global average pooling (GAP) for rotation invariant tasks.
Graph convolutional layers are always followed by batch normalization and ReLU activation, except in the last layer.
Note that batch normalization and activation act on the elements of $\b{f}$ independently, and hence don't depend on the domain of $f$.


\section{Graph convolution and equivariance} \label{sec:equivariance}

While the graph framework offers great flexibility, its ability to faithfully represent the underlying sphere --- for graph convolutions to be rotation equivariant --- highly depends on the sampling locations and the graph construction.

\subsection{Problem formulation}
A continuous function $f : \mathcal C(\S^2) \supset F_\V \to \R$ is sampled as $T_\V(f) = \b{f}$ by the sampling operator $T_\V: C(\S^2) \supset F_\V \to \R^n$ defined as $\b{f}: f_i=f(x_i)$.
We require $F_\V$ to be a suitable subspace of continuous functions such that $T_\V$ is invertible, i.e., the function $f \in F_\V$ can be unambiguously reconstructed from its sampled values $\b{f}$.
The existence of such a subspace depends on the sampling $\V$ and its characterization is a common problem in signal processing \citep{driscoll1994Fouriersphere}.
For most samplings, it is not known if $F_\V$ exists and hence if $T_\V$ is invertible.
A special case is the equiangular sampling where a sampling theorem holds, and thus a closed-form of $T_\V^{-1}$ is known. 
For samplings where no such sampling formula is available, we leverage the discrete SHT to reconstruct $f$ from $\b{f}=T_\V f$, thus approximating $T_\V^{-1}$.
For all theoretical considerations, we assume that $F_\V$ exists and $f \in F_\V$.


By definition, the (spherical) graph convolution is rotation equivariant if and only if it commutes with the rotation operator defined as $\Rg, g\in SO(3)$: $\Rg f(x) = f\left(g^{-1} x \right)$.
In the context of this work, graph convolution is performed by recursive applications of the graph Laplacian \eqnref{graph_conv}.
Hence, if $\Rg$ commutes with $\b{L}$, then, by recursion, it will also commute with the convolution $h(\b{L})$.
As a result, $h(\b{L})$ is rotation equivariant if and only if
\begin{equation*} 
	\b{R}_\V(g) \b{L} \b{f} = \b{L} \b{R}_\V(g) \b{f}, \hspace{1cm} \forall f\in F_\V \text{ and } \forall g\in SO(3),
\end{equation*}
where $\b{R}_\V(g) = T_\V \Rg T_\V^{-1}$.
For an empirical evaluation of equivariance, we define the \textit{normalized equivariance error} for a signal $\b{f}$ and a  rotation $g$ as
\begin{equation} \label{eq:equivariance error}
	E_{\b{L}}(\b{f}, g) = \left(\frac{ \norm {\b{R}_\V(g) \b{L} \b{f} - \b{L} \b{R}_\V(g) \b{f}} }{\norm {\b{L} \b{f}}}\right)^2.
\end{equation}
More generally for a class of signals $f \in C \subset F_\V$, the \textit{mean equivariance error} defined as
\begin{equation} \label{eq:mean equivariance error}
	\overline E_{\b{L}, C} = \mathbb E_{\b{f}\in C, g\in SO(3)} \ E_{\b{L}}(\b{f}, g)
\end{equation}
represents the overall equivariance error.
The expected value is obtained by averaging over a finite number of random functions and random rotations.

\subsection{Finding the optimal weighting scheme} \label{sec:optimal}

Considering the equiangular sampling and graphs where each vertex is connected to 4 neighbors (north, south, east, west), \citet{khasanova2017sphericalcnn} designed a weighting scheme to minimize \eqref{eq:mean equivariance error} for longitudinal and latitudinal rotations\footnote{Equivariance to longitudinal rotation is essentially given by the equiangular sampling.}.
Their solution gives weights inversely proportional to Euclidean distances:
\begin{equation} \label{eqn:weights:khasanova}
	w_{ij} = \frac{1}{\norm{x_i-x_j}}.
\end{equation}
While the resulting convolution is not equivariant to the whole of $SO(3)$ (\figref{equivariance_error}), it is enough for omnidirectional imaging because, as gravity consistently orients the sphere, objects only rotate longitudinally or latitudinally.

To achieve equivariance to all rotations, we take inspiration from \citet{belkin2005towards}.
They prove that for a \emph{random uniform sampling}, the graph Laplacian $\b{L}$ built from weights
\begin{equation} \label{eqn:weights:belkin}
	w_{ij} = e^{- \frac{1}{4t} \norm{x_i - x_j}^2}
\end{equation}
converges to the Laplace-Beltrami operator $\Delta_{\mathbb{S}^2}$ as the number of samples grows to infinity.
This result is a good starting point as $\Delta_{\mathbb{S}^2}$ commutes with rotation, i.e., $\Delta_{\mathbb{S}^2}\Rg = \Rg\Delta_{\mathbb{S}^2}$.
While the weighting scheme is full (i.e., every vertex is connected to every other vertex), most weights are small due to the exponential.
We hence make an approximation to limit the cost of the convolution \eqnref{graph_conv} by only considering the $k$ nearest neighbors ($k$-NN) of each vertex.
Given $k$, the optimal kernel width $t$ is found by searching for the minimizer of \eqref{eq:mean equivariance error}.
\Figref{kernel_widths} shows the optimal kernel widths found for various resolutions of the HEALPix sampling.
As predicted by the theory, $t_n \propto n^\beta, \beta \in \R$.
Importantly however, the optimal $t$ also depends on the number of neighbors $k$.

Considering the HEALPix sampling, \citet{perraudin2019deepspherecosmo} connected each vertex to their 8 adjacent vertices in the tiling of the sphere, computed the weights with \eqnref{weights:belkin}, and heuristically set $t$ to half the average squared Euclidean distance between connected vertices.
This heuristic however over-estimates $t$ (\figref{kernel_widths}) and leads to an increased equivariance error (\figref{equivariance_error}).

\begin{figure}
	\begin{minipage}{0.6\linewidth}
		\centering
		\includegraphics[width=\linewidth]{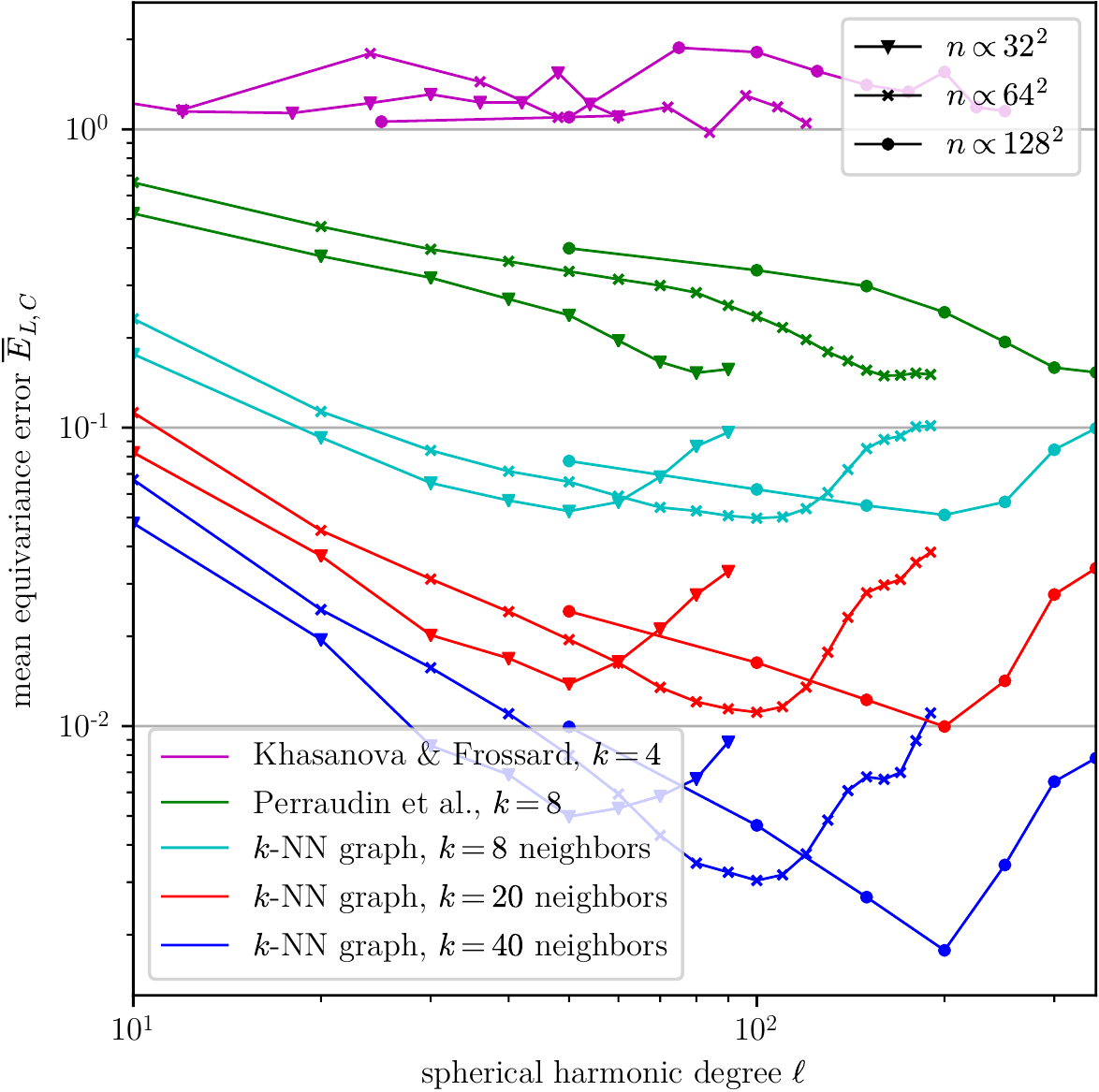}
		\caption{
			Mean equivariance error \eqref{eq:mean equivariance error}.
			There is a clear tradeoff between equivariance and computational cost, governed by the number of vertices $n$ and edges $kn$.
		}
		\label{fig:equivariance_error}
	\end{minipage}
	\hfill
	\begin{minipage}{0.35\linewidth}
		\centering
		\includegraphics[width=\linewidth]{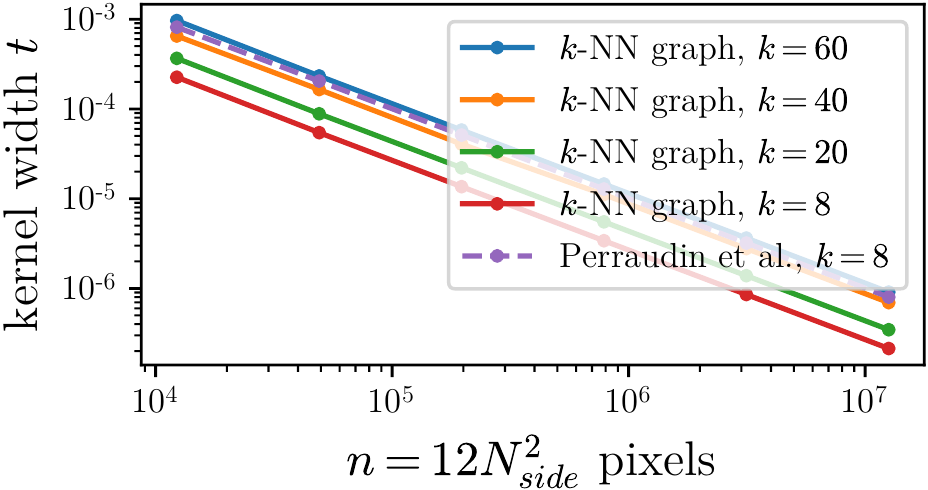}
		\caption{Kernel widths.}
		\label{fig:kernel_widths}
		\vspace{1em}
		\includegraphics[height=4.5em]{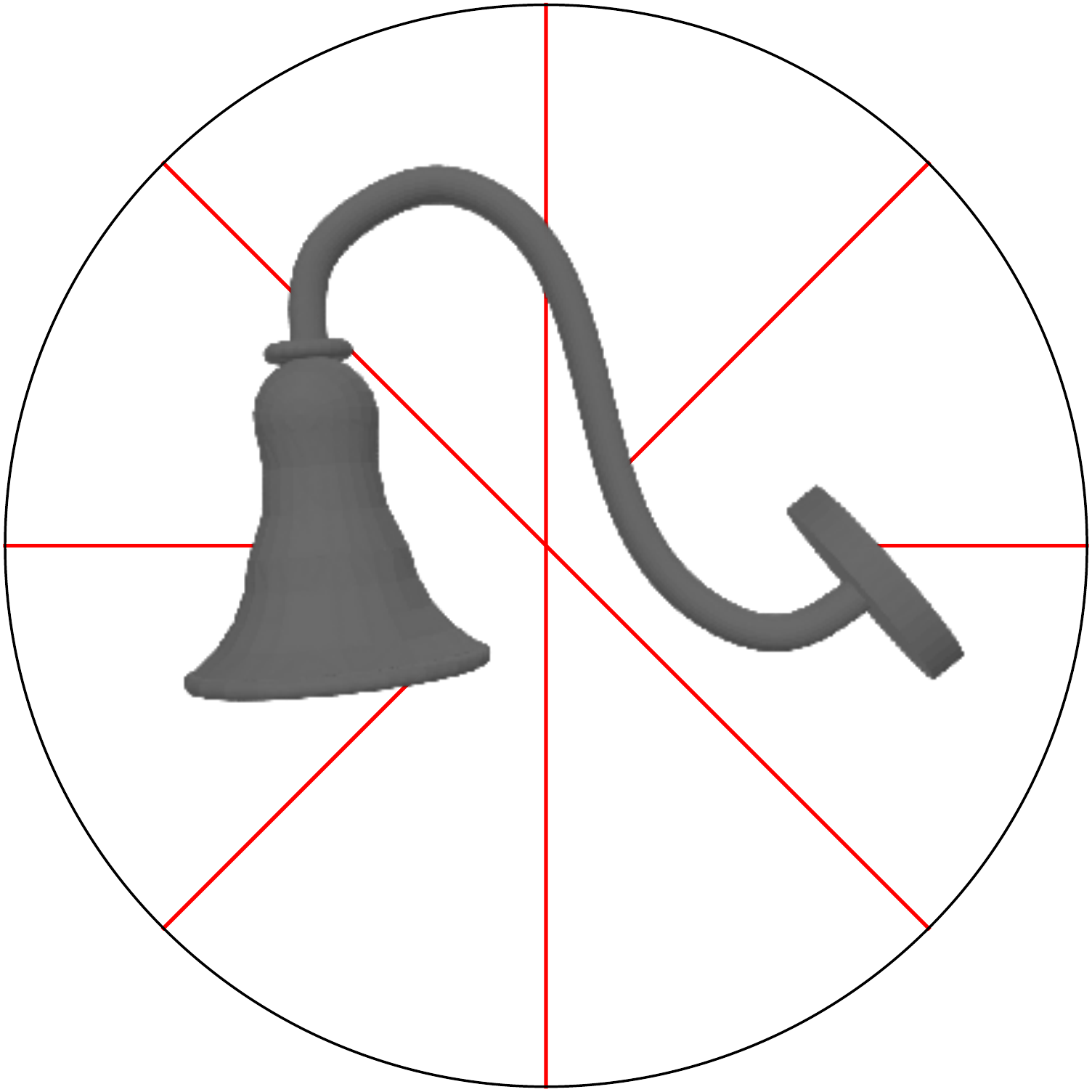}
		\hfill
		\includegraphics[height=4.5em]{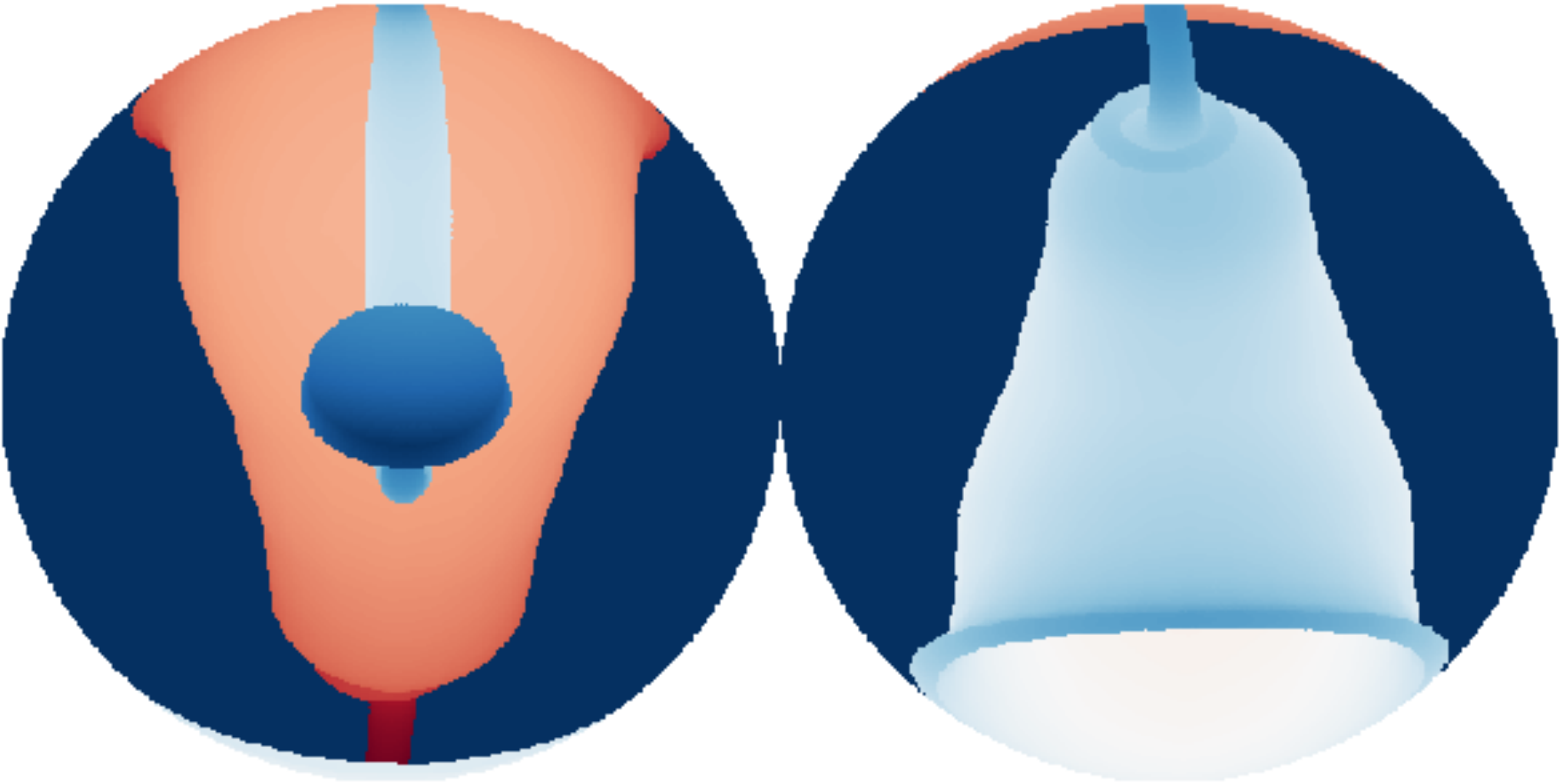}
		\caption{3D object represented as a spherical depth map.}
		\label{fig:depthmap}
		\vspace{1em}
		\includegraphics[width=\linewidth]{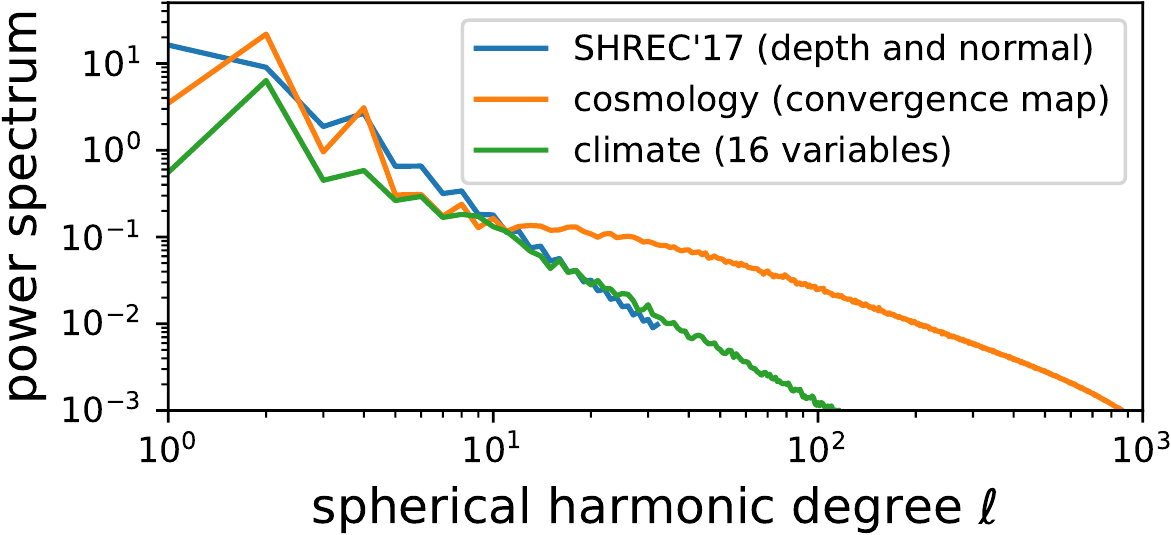}
		\caption{Power spectral densities.}
		\label{fig:spectrum}
	\end{minipage}
\end{figure}

\subsection{Analysis of the proposed weighting scheme}

We analyze the proposed weighting scheme both theoretically and empirically.

\paragraph{Theoretical convergence.}
\begin{wrapfigure}[10]{r}{3cm}
	\centering
	\includegraphics[width=0.9\linewidth]{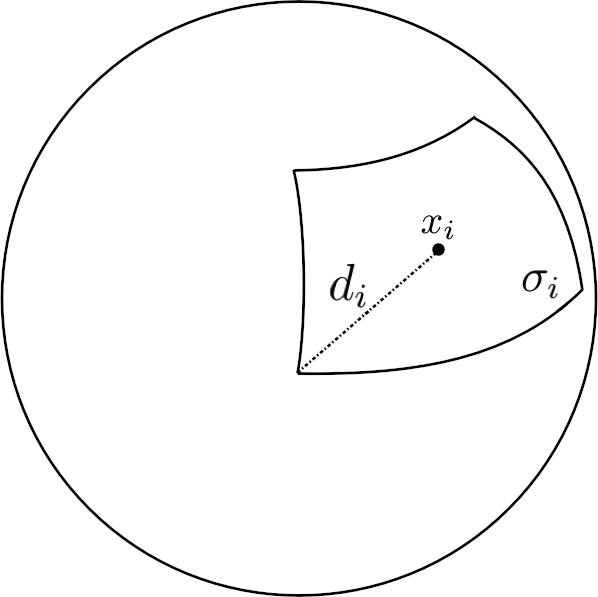}
	\caption{Patch.}
	\label{fig:patch}
\end{wrapfigure}
We extend the work of \citep{belkin2005towards} to a sufficiently regular, deterministic sampling.
Following their setting, we work with the \emph{extended graph Laplacian} operator as the linear operator $L_n^t: L^{2}(\S^2) \rightarrow L^{2}(\S^2)$ such that
\begin{equation} \label{eq:Heat Kernel Graph Laplacian operator}
	 L_n^t f(y) := \frac{1}{n}\sum_{i=1}^{n} e^{ -\frac{\|x_i-y\|^2}{4t}} \left(f(y)-f(x_i)\right).
\end{equation}
This operator extends the graph Laplacian with the weighting scheme \eqnref{weights:belkin} to each point of the sphere (i.e., $\b{L}_n^t \b{f} = T_\V L_n^t f$).
As the radius of the kernel $t$ will be adapted to the number of samples, we scale the operator as $\hat{L}_n^t := |\S^2| (4\pi t^2)^{-1} L_n^t$.  
Given a sampling $\V$, we define $\sigma_i$ to be the patch of the surface of the sphere corresponding to $x_i$, $A_i$ its corresponding area, and $d_i$ the largest distance between the center $x_i$ and any point on the surface $\sigma_i$.
Define $d^{(n)} := \max_{i=1, \dots, n} d_i$ and $A^{(n)} := \max_{i=1, \dots, n} A_i$.
\begin{theorem}
	For a sampling $\V$ of the sphere that is equi-area and such that $d^{(n)} \leq \frac{C}{n^\alpha}, \ \alpha\in (0,\linefrac{1}{2}]$, for all $f: \S^2 \rightarrow \R$ Lipschitz with respect to the Euclidean distance in $\R^3$, for all $y\in\S^2$, there exists a sequence $t_n = n^\beta$, $\beta\in\mathbb R$ such that
	\begin{equation*}
		\lim_{n\to\infty}\hat{L}_n^{t_n}f(y) = \laplacian f(y).
	\end{equation*}
	\label{theo:pointwise convergence for a regular sampling}
\end{theorem}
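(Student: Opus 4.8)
The plan is to control the pointwise error $\seminorm{\Ln f(y) - \laplacian f(y)}$ by inserting the \emph{continuous} averaging operator obtained from $L_n^t$ when the empirical mean over $\V$ is replaced by an integral over the sphere. Concretely I would define
\begin{equation*}
	L^t f(y) := \frac{1}{|\S^2|} \int_{\S^2} e^{-\norm{x-y}^2/(4t)} \left( f(y) - f(x) \right) dx , \qquad \hat L^t := \frac{|\S^2|}{4\pi t^2} L^t ,
\end{equation*}
and then split, by the triangle inequality,
\begin{equation*}
	\seminorm{\Ln f(y) - \laplacian f(y)} \le \underbrace{\seminorm{\Ln f(y) - \hat L^{t_n} f(y)}}_{\text{discretization}} + \underbrace{\seminorm{\hat L^{t_n} f(y) - \laplacian f(y)}}_{\text{bias}} .
\end{equation*}
The goal is then to pick $t_n = n^\beta \to 0$ so that both terms vanish as $n \to \infty$.

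For the bias term I would appeal to the classical Laplace-method / heat-kernel expansion, exactly as in \citep{belkin2005towards}: Taylor-expanding $f$ around $y$ and computing Gaussian moments, the zeroth- and first-order contributions drop out (constant and odd), and the normalization $(4\pi t^2)^{-1}$ is precisely the one that makes the remaining second-order term reproduce $\laplacian f(y)$ in dimension two. Two points need care here. First, this step implicitly requires $f$ to be smooth enough at $y$ for $\laplacian f(y)$ to be defined and the quadratic Taylor remainder to be controlled (for band-limited $f \in F_\V$ this is automatic; the bare Lipschitz hypothesis governs only the discretization term). Second, because we use the ambient chordal distance $\norm{x-y}$ rather than the geodesic distance, one must verify that the extrinsic curvature corrections are of lower order in $t$ and vanish in the limit — this is the content of the heat-kernel lemma of \citep{belkin2005towards} restricted to $\S^2$, which we inherit and which does not use the determinism of the sampling.

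The genuinely new ingredient, and the main obstacle, is the discretization term, where the \emph{equi-area} hypothesis replaces the probabilistic concentration argument valid for random samplings. Since the sampling is equi-area, every patch satisfies $A_i = |\S^2| / n$, so $1/n = A_i / |\S^2|$ and $\Ln f(y) - \hat L^{t_n} f(y)$ is exactly the Riemann-sum error, over the patches $\sigma_i$, of the integrand $g_y(x) := e^{-\norm{x-y}^2/(4t)}(f(y)-f(x))$. On each patch the integrand varies by at most $\mathrm{Lip}(g_y)\,d_i \le \mathrm{Lip}(g_y)\,d^{(n)}$, and summing areas yields
\begin{equation*}
	\seminorm{\Ln f(y) - \hat L^{t} f(y)} \le \frac{|\S^2|}{4\pi t^2}\, \mathrm{Lip}(g_y)\, d^{(n)} .
\end{equation*}
Here I would estimate $\mathrm{Lip}(g_y) = \bO(t^{-1/2})$ for small $t$: the Gaussian factor has gradient bounded by $\tfrac{\norm{x-y}}{2t} e^{-\norm{x-y}^2/(4t)}$, which is maximized at $\norm{x-y} = \sqrt{2t}$, while $f$ only contributes its (fixed) Lipschitz constant. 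Hence the discretization error is $\bO(t^{-5/2} d^{(n)}) = \bO(n^{-\alpha}\, t^{-5/2})$ using $d^{(n)} \le C n^{-\alpha}$, uniformly in $y$.

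Finally I would balance the two terms. The bias forces $t_n \to 0$, i.e.\ $\beta < 0$, whereas the discretization bound $\bO(n^{-\alpha - \frac{5}{2}\beta})$ vanishes iff $\alpha + \tfrac{5}{2}\beta > 0$, i.e.\ $\beta > -\tfrac{2\alpha}{5}$; both hold simultaneously for any $\beta$ in the interval $\left( -\tfrac{2\alpha}{5},\, 0 \right)$, which is nonempty since $\alpha > 0$. Choosing such a $\beta$ and setting $t_n = n^\beta$ then sends both terms to zero, which proves the claim. I expect the delicate points to be the sharp bookkeeping of the $t$-powers coming from $\mathrm{Lip}(g_y)$ against the $t^{-2}$ in the scaling, and checking that the equi-area Riemann-sum estimate is genuinely uniform in $y$ so that the limit holds at every point of $\S^2$.
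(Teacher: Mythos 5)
Your proposal follows essentially the same route as the paper's own proof: the same intermediate operator $L^t$ (Belkin--Niyogi's functional approximation to the Laplace--Beltrami operator), the same equi-area Riemann-sum estimate with the Lipschitz constant of the Gaussian-weighted integrand scaling as $\bO(t^{-1/2})$, the same appeal to Belkin--Niyogi's Proposition~4.4 for the $t \to 0$ bias, and the same final window $\beta \in (-\tfrac{2\alpha}{5}, 0)$ obtained by balancing $\bO(n^{-\alpha} t^{-5/2})$ against the scaling. If anything, you are slightly more careful than the paper, which states the theorem for Lipschitz $f$ but then invokes the bias result requiring smooth $f$ --- a mismatch you correctly flag by noting that the bare Lipschitz hypothesis only governs the discretization term.
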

This is a major step towards equivariance, as the Laplace-Beltrami operator commutes with rotation.
Based on this property, we show the equivariance of the scaled extended graph Laplacian.
\begin{theorem}\label{theo:equivariance}
Under the hypothesis of theorem~\ref{theo:pointwise convergence for a regular sampling}, the scaled graph Laplacian commutes with any rotation, in the limit of infinite sampling, i.e.,
\begin{equation*}
	\forall y\in\mathbb S^2\quad\left| \Rg \hat L_n^{t_n} f (y) - \hat L_n^{t_n} \Rg f(y) \right| \xrightarrow{n\to\infty}0.
\end{equation*}
\end{theorem}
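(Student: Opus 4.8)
The plan is to deduce Theorem~\ref{theo:equivariance} directly from the pointwise convergence established in Theorem~\ref{theo:pointwise convergence for a regular sampling}, combined with the single structural fact that $\laplacian$ commutes with every rotation. The guiding idea is that each of the two terms $\Rg \Ln f(y)$ and $\Ln \Rg f(y)$ converges separately, as $n\to\infty$, to a limit expressed through $\laplacian$, and that these two limits cancel precisely because of the commutation. So the whole statement reduces to a difference-of-limits argument.

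First I would unwind the rotation operator. Since $\Rg h(y) = h(g^{-1}y)$ for any function $h$, the first term reads $\Rg \Ln f(y) = \Ln f(g^{-1}y)$, i.e. the scaled extended graph Laplacian of $f$ evaluated at the rotated point $g^{-1}y \in \S^2$. Applying Theorem~\ref{theo:pointwise convergence for a regular sampling} at that point gives $\Ln f(g^{-1}y) \to \laplacian f(g^{-1}y) = \Rg \laplacian f(y)$.

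For the second term I would apply Theorem~\ref{theo:pointwise convergence for a regular sampling} to the rotated function $\Rg f = f\circ g^{-1}$, evaluated at $y$. The one verification that matters is that $\Rg f$ still meets the theorem's hypothesis of being Lipschitz with respect to the Euclidean distance in $\R^3$: because $g\in SO(3)$ is orthogonal it preserves the Euclidean norm, so $\norm{g^{-1}x - g^{-1}x'} = \norm{x-x'}$ and $\Rg f$ inherits the Lipschitz constant of $f$. Theorem~\ref{theo:pointwise convergence for a regular sampling} then yields $\Ln \Rg f(y) \to \laplacian \Rg f(y)$. Combining both limits through the triangle inequality, $|\Rg \Ln f(y) - \Ln \Rg f(y)|$ is dominated by the sum of the two individual convergence errors and hence tends to $|\Rg \laplacian f(y) - \laplacian \Rg f(y)|$, which is exactly $0$ by the commutation $\laplacian \Rg = \Rg \laplacian$.

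The main obstacle — indeed essentially the only nonroutine point — is confirming that the hypotheses of Theorem~\ref{theo:pointwise convergence for a regular sampling} transfer to $\Rg f$, i.e. its Lipschitz regularity, which follows from rotations being isometries of $\R^3$. A secondary point worth stating carefully is that the sampling sequence $\V$ is held fixed while the function is rotated; this is legitimate because Theorem~\ref{theo:pointwise convergence for a regular sampling} holds uniformly over every admissible sampling and every Lipschitz function, so both applications (to $f$ at $g^{-1}y$, and to $\Rg f$ at $y$) are valid pointwise. Everything else is the elementary bookkeeping of passing to the limit in a difference.
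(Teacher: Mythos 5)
Your proposal is correct and follows essentially the same route as the paper's own proof: both split $\left| \Rg \Ln f (y) - \Ln \Rg f(y) \right|$ via the triangle inequality around $\Rg \laplacian f(y)$, apply Theorem~\ref{theo:pointwise convergence for a regular sampling} once at the rotated point $g^{-1}y$ and once to the rotated function $\Rg f$, and invoke the commutation of $\laplacian$ with rotations (isometries) to make the limits cancel. Your explicit check that $\Rg f$ inherits the Lipschitz constant of $f$ is a detail the paper states only implicitly (``$f'$ still satisfies hypothesis''), but it is the same argument.
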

From this theorem, it follows that the discrete graph Laplacian will be equivariant in the limit of $n \rightarrow \infty$ as by construction $\b{L}_n^t \b{f} = T_\V L_n^t f$ and as the scaling does not affect the equivariance property of $\b{L}_n^t$.

Importantly, the proof of Theorem~\ref{theo:pointwise convergence for a regular sampling} (in Appendix~\ref{sec: appendix: proof of theorem}) inspires our construction of the graph Laplacian.
In particular, it tells us that $t$ should scale as $n^\beta$, which has been empirically verified (\figref{kernel_widths}).
Nevertheless, it is important to keep in mind the limits of Theorem \ref{theo:pointwise convergence for a regular sampling} and \ref{theo:equivariance}.
Both theorems present asymptotic results, but in practice we will always work with finite samplings.
Furthermore, since this method is based on the capability of the eigenvectors of the graph Laplacian to approximate the spherical harmonics,
a stronger type of convergence of the graph Laplacian would be preferable, i.e., spectral convergence (that is proved for a full graph in the case of random sampling for a class of Lipschitz functions in \citep{belkin2007convergence}).
Finally, while we do not have a formal proof for it, we strongly believe that the HEALPix sampling does satisfy the hypothesis $d^{(n)}\leq \frac{C}{n^\alpha}, \ \alpha\in (0,\linefrac{1}{2}]$, with $\alpha$ very close or equal to $\linefrac{1}{2}$.
The empirical results discussed in the next paragraph also points in this direction.
This is further discussed in Appendix \ref{sec: appendix: proof of theorem}.

\paragraph{Empirical convergence.}

\Figref{equivariance_error} shows the equivariance error \eqref{eq:mean equivariance error} for different parameter sets of DeepSphere for the HEALPix sampling as well as for the graph construction of \citet{khasanova2017sphericalcnn} for the equiangular sampling.
The error is estimated as a function of the sampling resolution and signal frequency.
The resolution is controlled by the number of pixels $n = 12N_{side}^2$ for HEALPix and $n = 4b^2$ for the equiangular sampling.
The frequency is controlled by setting the set $C$ to functions $f$ made of spherical harmonics of a single degree $\ell$.
To allow for an almost perfect implementation (up to numerical errors) of the operator $\b{R}_\V$, the degree $\ell$ was chosen in the range $(0, 3N_{side}-1)$ for HEALPix and $(0, b)$ for the equiangular sampling \citep{gorski1999healpixprimer}.
Using these parameters, the measured error is mostly due to imperfections in the empirical approximation of the Laplace-Beltrami operator and not to the sampling.

\Figref{equivariance_error} shows that the weighting scheme \eqnref{weights:khasanova} from \citep{khasanova2017sphericalcnn} does indeed not lead to a convolution that is equivariant to all rotations $g \in SO(3)$.\footnote{We however verified that the convolution is equivariant to longitudinal and latitudinal rotations, as intended.}
For $k=8$ neighbors, selecting the optimal kernel width $t$ improves on \citep{perraudin2019deepspherecosmo} at no cost, highlighting the importance of this parameter.
Increasing the resolution decreases the equivariance error in the high frequencies, an effect most probably due to the sampling.
Most importantly, the equivariance error decreases when connecting more neighbors.
Hence, the number of neighbors $k$ gives us a precise control of the tradeoff between cost and equivariance.

\section{Experiments}


\subsection{3D objects recognition} \label{sec:exp:shrec}

The recognition of 3D shapes is a rotation invariant task: rotating an object doesn't change its nature.
While 3D shapes are usually represented as meshes or point clouds, representing them as spherical maps (\figref{depthmap}) naturally allows a rotation invariant treatment.

The SHREC'17 shape retrieval contest \citep{shrec17} contains 51,300 randomly oriented 3D models from ShapeNet \citep{shapenet}, to be classified in 55 categories (tables, lamps, airplanes, etc.).
As in \citep{cohen2018sphericalcnn}, objects are represented by 6 spherical maps.
At each pixel, a ray is traced towards the center of the sphere.
The distance from the sphere to the object forms a depth map.
The $\cos$ and $\sin$ of the surface angle forms two normal maps.
The same is done for the object's convex hull.\footnote{Albeit we didn't observe much improvement by using the convex hull.}
The maps are sampled by an equiangular sampling with bandwidth $b = 64$ ($n = 4 b^2 = 16,384$ pixels) or an HEALPix sampling with $N_{side} = 32$ ($n = 12 N_{side}^2 = 12,288$ pixels).

The equiangular graph is built with \eqnref{weights:khasanova} and $k = 4$ neighbors \citep[following][]{khasanova2017sphericalcnn}.
The HEALPix graph is built with \eqnref{weights:belkin}, $k = 8$, and a kernel width $t$ set to the average of the distances \citep[following][]{perraudin2019deepspherecosmo}.
The NN is made of $5$ graph convolutional layers, each followed by a max pooling layer which down-samples by $4$.
A GAP and a fully connected layer with softmax follow.
The polynomials are all of order $P=3$ and the number of channels per layer is $16, 32, 64, 128, 256$, respectively.
Following \citet{esteves2018sphericalcnn}, the cross-entropy plus a triplet loss is optimized with Adam for 30 epochs on the dataset augmented by 3 random translations.
The learning rate is $5 \cdot 10^{-2}$ and the batch size is 32.

\begin{table}
    \centering
	\begin{tabular}{l cc r rS[table-format=2.0]}
		\toprule
		& \multicolumn{2}{c}{performance} & \multicolumn{1}{c}{size} & \multicolumn{2}{c}{speed} \\
		\cmidrule(lr){2-3} \cmidrule(lr){4-4} \cmidrule(lr){5-6}
		& F1 & mAP & params & \multicolumn{1}{c}{inference} & \multicolumn{1}{c}{training} \\
		\midrule
		\citet{cohen2018sphericalcnn} ($b=128$) & - & 67.6 & 1400\,k & 38.0\,ms & 50\,h \\
		\citet{cohen2018sphericalcnn} (simplified,\protect\footnotemark $b=64$) & 78.9 & 66.5 & 400\,k & 12.0\,ms & 32\,h \\
		\citet{esteves2018sphericalcnn} ($b=64$) & 79.4 & 68.5 & 500\,k & 9.8\,ms & 3\,h \\
		DeepSphere (equiangular, $b=64$) & 79.4 & 66.5 & 190\,k & 0.9\,ms & 50\,m \\
		DeepSphere (HEALPix, $N_{side}=32$) & 80.7 & 68.6 & 190\,k & 0.9\,ms & 50\,m \\
		\bottomrule
	\end{tabular}
    \caption{
		Results on SHREC'17 (3D shapes). DeepSphere achieves similar performance at a much lower cost, suggesting that anisotropic filters are an unnecessary price to pay.
	}
    \label{tab:shrec17}
\end{table}
\footnotetext[7]{As implemented in \url{https://github.com/jonas-koehler/s2cnn}.}

Results are shown in \tabref{shrec17}.
As the network is trained for shape classification rather than retrieval, we report the classification F1 alongside the mAP used in the retrieval contest.\footnote{We omit the F1 for \citet{cohen2018sphericalcnn} as we didn't get the mAP reported in the paper when running it.}
DeepSphere achieves the same performance as \citet{cohen2018sphericalcnn} and \citet{esteves2018sphericalcnn} at a much lower cost, suggesting that anisotropic filters are an unnecessary price to pay.
As the information in those spherical maps resides in the low frequencies (\figref{spectrum}), reducing the equivariance error didn't translate into improved performance.
For the same reason, using the more uniform HEALPix sampling or lowering the resolution down to $N_{side} = 8$ ($n=768$ pixels) didn't impact performance either.


\subsection{Cosmological model classification} \label{sec:exp:cosmo}

Given observations, cosmologists estimate the posterior probability of cosmological parameters, such as the matter density $\Omega_m$ and the normalization of the matter power spectrum $\sigma_8$.
Those parameters are estimated by likelihood-free inference, which requires a method to extract summary statistics to compare simulations and observations.
As the sufficient and most concise summary statistics are the parameters themselves, one desires a method to predict them from simulations.
As that is complicated to setup, prediction methods are typically benchmarked on the classification of spherical maps instead \citep{schmelze2017cosmologicalmodel}.
We used the same task, data, and setup as \citet{perraudin2019deepspherecosmo}: the classification of $720$ partial convergence maps made of $n \approx 10^6$ pixels ($1/12 \approx 8\%$ of a sphere at $N_{side} = 1024$) from two $\Lambda$CDM cosmological models, ($\Omega_m = 0.31$, $\sigma_8 = 0.82)$ and ($\Omega_m = 0.26$, $\sigma_8 = 0.91)$, at a relative noise level of $3.5$ (i.e., the signal is hidden in noise of $3.5$ times higher standard deviation).
Convergence maps represent the distribution of over- and under-densities of mass in the universe \citep[see][for a review of gravitational lensing]{bartelman2010gravitationallensing}.

Graphs are built with \eqnref{weights:belkin}, $k = 8, 20, 40$ neighbors, and the corresponding optimal kernel widths $t$ given in \secref{optimal}.
Following \citet{perraudin2019deepspherecosmo}, the NN is made of $5$ graph convolutional layers, each followed by a max pooling layer which down-samples by $4$.
A GAP and a fully connected layer with softmax follow.
The polynomials are all of order $P=4$ and the number of channels per layer is $16, 32, 64, 64, 64$, respectively.
The cross-entropy loss is optimized with Adam for 80 epochs.
The learning rate is $2 \cdot 10^{-4} \cdot 0.999^{\textrm{step}}$ and the batch size is 8.

\begin{table}
	\begin{minipage}[b]{0.7\linewidth}
    \centering
    \begin{tabular}{l c c}
		\toprule
		& accuracy & time \\ 
		\midrule
		\citet{perraudin2019deepspherecosmo}, 2D CNN baseline & 54.2 & 104\,ms \\
		\citet{perraudin2019deepspherecosmo}, CNN variant, $k=8$ & 62.1 & 185\,ms \\ 
		\citet{perraudin2019deepspherecosmo}, FCN variant, $k=8$ & 83.8 & 185\,ms \\ 
		$k=8$  neighbors, $t$ from \secref{optimal} & 87.1 & 185\,ms \\ 
		$k=20$ neighbors, $t$ from \secref{optimal} & 91.3 & 250\,ms \\ 
		$k=40$ neighbors, $t$ from \secref{optimal} & 92.5 & 363\,ms \\ 
		\bottomrule
    \end{tabular}
	\caption{
		Results on the classification of partial convergence maps.
		Lower equivariance error translates to higher performance.
	} \label{tab:cosmo}
	\end{minipage} \hfill
	\begin{minipage}[b]{0.25\linewidth}
		\includegraphics[width=\linewidth]{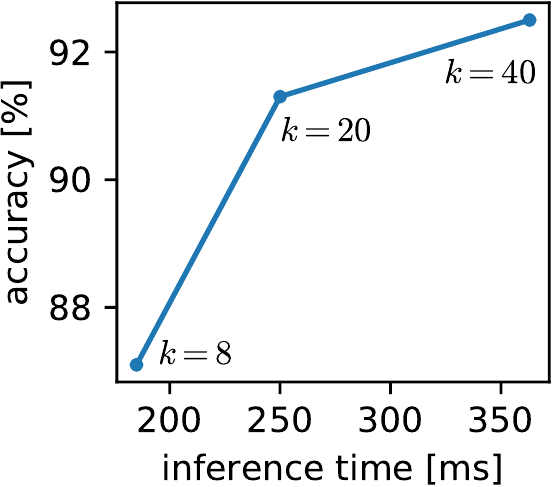}
		\captionof{figure}{Tradeoff between cost and accuracy.}
		\label{fig:cosmo:tradeoff}
	\end{minipage}
\end{table}

Unlike on SHREC'17, results (\tabref{cosmo}) show that a lower equivariance error on the convolutions translates to higher performance.
That is probably due to the high frequency content of those maps (\figref{spectrum}).
There is a clear cost-accuracy tradeoff, controlled by the number of neighbors $k$ (\figref{cosmo:tradeoff}).
This experiment moreover demonstrates DeepSphere's flexibility (using partial spherical maps) and scalability (competing spherical CNNs were tested on maps of at most $10,000$ pixels).

\subsection{Climate event segmentation} \label{sec:exp:climate}

We evaluate our method on a task proposed by \citep{mudigonda2017climateevents}: the segmentation of extreme climate events, Tropical Cyclones (TC) and Atmospheric Rivers (AR), in global climate simulations (\figref{examples:climate}).
The data was produced by a 20-year run of the Community Atmospheric Model v5 (CAM5) and consists of 16 channels such as temperature, wind, humidity, and pressure at multiple altitudes.
We used the pre-processed dataset from \citep{jiang2019sphericalcnn}.\footnote{Available at \url{http://island.me.berkeley.edu/ugscnn/data}.}
There is 1,072,805 spherical maps, down-sampled to a level-5 icosahedral sampling ($n = 10 \cdot 4^l + 2 = 10,242$ pixels).
The labels are heavily unbalanced with 0.1\% TC, 2.2\% AR, and 97.7\% background (BG) pixels.

The graph is built with \eqnref{weights:belkin}, $k = 6$ neighbors, and a kernel width $t$ set to the average of the distances.
Following \citet{jiang2019sphericalcnn}, the NN is an encoder-decoder with skip connections.
Details in \secref{climate:appendix}.
The polynomials are all of order $P=3$.
The cross-entropy loss (weighted or non-weighted) is optimized with Adam for 30 epochs.
The learning rate is $1 \cdot 10{-3}$ and the batch size is 64.

\begin{table}
	\centering
	\begin{tabular}{l l l}
	\toprule
	& accuracy & mAP \\
	\midrule
	\citet{jiang2019sphericalcnn} (rerun) & 94.95 & 38.41 \\
	\citet{cohen2019gauge} (S2R) & 97.5 & 68.6 \\
	\citet{cohen2019gauge} (R2R) & 97.7 & 75.9 \\
	DeepSphere (weighted loss) & $97.8\pm 0.3$ & $77.15\pm 1.94$ \\
	DeepSphere (non-weighted loss) & $87.8\pm 0.5$ & $89.16\pm 1.37$ \\
	\bottomrule
	\end{tabular}
	\caption{
		Results on climate event segmentation: mean accuracy (over TC, AR, BG) and mean average precision (over TC and AR).
		DeepSphere achieves state-of-the-art performance. 
	}
	\label{tab:climate}
\end{table}

Results are shown in \tabref{climate} (details in tables~\ref{tab:climate:accuracy}, \ref{tab:climate:map} and~\ref{tab:climate:speed}).
The mean and standard deviation are computed over 5 runs.
Note that while \citet{jiang2019sphericalcnn} and \citet{cohen2019gauge} use a weighted cross-entropy loss, that is a suboptimal proxy for the mAP metric.
DeepSphere achieves state-of-the-art performance, suggesting again that anisotropic filters are unnecessary.
Note that results from \citet{mudigonda2017climateevents} cannot be directly compared as they don't use the same input channels.

Compared to \citet{cohen2019gauge}'s conclusion, it is surprising that S2R does worse than DeepSphere (which is limited to S2S).
Potential explanations are (i) that their icosahedral projection introduces harmful distortions, or (ii) that a larger architecture can compensate for the lack of generality. 
We indeed observed that more feature maps and depth led to higher performance (\secref{climate:appendix}).

\subsection{Uneven sampling} \label{sec:exp:ghcn}

To demonstrate the flexibility of modeling the sampled sphere by a graph, we collected historical measurements from $n \approx 10,000$ weather stations scattered across the Earth.\footnote{\url{https://www.ncdc.noaa.gov/ghcn-daily-description}}
The spherical data is heavily non-uniformly sampled, with a much higher density of weather stations over North America than the Pacific (\figref{examples:ghcn:tmax}).
For illustration, we devised two artificial tasks.
A dense regression: predict the temperature on a given day knowing the temperature on the previous 5 days.
A global regression: predict the day (represented as one period of a sine over the year) from temperature or precipitations.
Predicting from temperature is much easier as it has a clear yearly pattern.

The graph is built with \eqnref{weights:belkin}, $k = 5$ neighbors, and a kernel width $t$ set to the average of the distances.
The equivariance property of the resulting graph has not been tested, and we don't expect it to be good due to the heavily non-uniform sampling.
The NN is made of $3$ graph convolutional layers.
The polynomials are all of order $P=0$ or $4$ and the number of channels per layer is $50, 100, 100$, respectively.
For the global regression, a GAP and a fully connected layer follow.
For the dense regression, a graph convolutional layer follows instead.
The MSE loss is optimized with RMSprop for 250 epochs.
The learning rate is $1 \cdot 10^{-3}$ and the batch size is 64.

\begin{table}
    \centering
    \begin{tabular}{
			c
			S[table-format=2.2]
			S[table-format=1.2]
			S[table-format=1.3]
			S[table-format=1.2]
			S[table-format=1.2]
			S[table-format=1.3]
			S[table-format=1.2]
			S[table-format=1.2]
			S[table-format=-1.3]
		}
		\toprule
		& \multicolumn{3}{c}{temp. (from past temp.)} & \multicolumn{3}{c}{day (from temperature)} & \multicolumn{3}{c}{day (from precipitations)} \\
		\cmidrule(lr){2-4} \cmidrule(lr){5-7} \cmidrule(lr){8-10}
		order $P$ & {MSE} & {MAE} & {R2} & {MSE} & {MAE} & {R2} & {MSE} & {MAE} & {R2} \\
		\midrule
		$0$ & 10.88 & 2.42 & 0.896 & 0.10 & 0.10 & 0.882 & 0.58 & 0.42 & -0.980 \\
		$4$ &  8.20 & 2.11 & 0.919 & 0.05 & 0.05 & 0.969 & 0.50 & 0.18 &  0.597 \\
		\bottomrule
    \end{tabular}
    \caption{
		Prediction results on data from weather stations.
		Structure always improves performance.
	}
    \label{tab:ghcn}
\end{table}

Results are shown in \tabref{ghcn}.
While using a polynomial order $P=0$ is like modeling each time series independently with an MLP, orders $P>0$ integrate neighborhood information.
Results show that using the structure induced by the spherical geometry always yields better performance.


\section{Conclusion}

This work showed that DeepSphere strikes an interesting, and we think currently optimal, balance between desiderata for a spherical CNN.
A single parameter, the number of neighbors $k$ a pixel is connected to in the graph, controls the tradeoff between cost and equivariance (which is linked to performance).
As computational cost and memory consumption scales linearly with the number of pixels, DeepSphere scales to spherical maps made of millions of pixels, a required resolution to faithfully represent cosmological and climate data.
Also relevant in scientific applications is the flexibility offered by a graph representation (for partial coverage, missing data, and non-uniform samplings).
Finally, the implementation of the graph convolution is straightforward, and the ubiquity of graph neural networks --- pushing for their first-class support in DL frameworks --- will make implementations even easier and more efficient.

A potential drawback of graph Laplacian-based approaches is the isotropy of graph filters, reducing in principle the expressive power of the NN.
Experiments from \citet{cohen2019gauge} and \citet{boscaini2016anisotropicgraphnn} indeed suggest that more general convolutions achieve better performance.
Our experiments on 3D shapes (\secref{exp:shrec}) and climate (\secref{exp:climate}) however show that DeepSphere's isotropic filters do not hinder performance.
Possible explanations for this discrepancy are that NNs somehow compensate for the lack of anisotropic filters, or that some tasks can be solved with isotropic filters.
The distortions induced by the icosahedral projection in \citep{cohen2019gauge} or the leakage of curvature information in \citep{boscaini2016anisotropicgraphnn} might also alter performance.

Developing graph convolutions on irregular samplings that respect the geometry of the sphere is another research direction of importance.
Practitioners currently interpolate their measurements (coming from arbitrarily positioned weather stations, satellites or telescopes) to regular samplings.
This practice either results in a waste of resolution or computational and storage resources.
Our ultimate goal is for practitioners to be able to work directly on their measurements, however distributed.




\subsubsection*{Acknowledgments}

We thank Pierre Vandergheynst for advices,
and Taco Cohen for his inputs on the intriguing results of our comparison with \citet{cohen2019gauge}.
We thank the anonymous reviewers for their constructive feedback.
The following software packages were used for computation and plotting: PyGSP \citep{pygsp}, healpy \citep{healpy}, matplotlib \citep{matplotlib}, SciPy \citep{scipy}, NumPy \citep{numpy}, TensorFlow \citep{tensorflow}.

\bibliography{references}
\bibliographystyle{iclr2020}

\newpage
{\LARGE \sc {Supplementary Material}}
\appendix

\section{Proof of theorem \ref{theo:pointwise convergence for a regular sampling}}\label{sec: appendix: proof of theorem}

\paragraph{Preliminaries.}
The proof of theorem \ref{theo:pointwise convergence for a regular sampling} is inspired from the work of \citet{belkin2005towards}. As a result, we start by restating some of their results.
Given a sampling $\V = \{x_i\in\mathcal M\}_{i=1}^n$ of a closed, compact and infinitely differentiable manifold $\mathcal{M}$, a smooth ($\in\mathcal{C}_\infty(\mathcal{M)}$) function  $f:\mathcal{M} \rightarrow \mathbb{R}$, and defined the vector $\b{f}$ of samples of $f$ as follows: $T_\V f = \b{f} \in \mathbb{R}^n,\ \b{f}_i = f(x_i)$.
The proof is constructed by leveraging 3 different operators:
\begin{itemize}
  \item The extended graph Laplacian operator, already presented in \eqref{eq:Heat Kernel Graph Laplacian operator}, is a linear operator $L_n^t: L^{2}(\mathcal{M}) \rightarrow L^{2}(\mathcal{M})$ defined as
	\begin{equation}
	 L_n^t f(y) := \frac{1}{n}\sum_{i=1}^{n} e^{ -\frac{\|x_i-y\|^2}{4t}} \left(f(y)-f(x_i)\right).
	\end{equation}
	Note that we have the following relation $\b{L}_n^t \b{f} = T_\V L_n^t f$.
	\item The functional approximation to the Laplace-Beltrami operator is a linear operator $L^t: L^{2}(\mathcal{M}) \rightarrow L^{2}(\mathcal{M})$ defined  as
	\begin{equation}
    \label{eq:Functional approximation to the Laplace-Beltrami operator}
	L^tf(y) = \int_{\mathcal{M}} e^{-\frac{\|x-y\|^2}{4t}}\left(f(y)-f(x)\right)d\mu(x),
	\end{equation}
	where $\mu$ is the uniform probability measure on the manifold $\mathcal{M}$, and $\text{vol}(\mathcal{M})$ is the volume of $\mathcal{M}$.
	\item
	The Laplace-Beltrami operator $\Delta_{\mathcal{M}}$ is defined as the divergence of the gradient
	\begin{equation}
        \label{eq:laplace-beltrami}
        \Delta_{\mathcal M}f(y) := -\text{div}(\nabla_{\mathcal M}f)
    \end{equation}
    of a differentiable function $f: \mathcal M\rightarrow \R$. The gradient $\nabla f: \mathcal M \rightarrow T_p\mathcal M$ is a vector field defined on the manifold pointing towards the direction of steepest ascent of $f$, where $T_p\mathcal M$ is the affine space of all vectors tangent to $\mathcal M$ at $p$.

\end{itemize}

Leveraging these three operators, \citet{belkin2005towards, belkin2007convergence} have build proofs of both pointwise and spectral convergence of the extended graph Laplacian towards the Laplace-Beltrami operator in the general setting of any compact, closed and infinitely differentiable manifold $\mathcal M$, where the sampling $\V$ is drawn randomly on the manifold. For this reason, their results are all to be interpreted in a probabilistic sense.
Their proofs consist in establishing that \eqref{eq:Heat Kernel Graph Laplacian operator} converges in probability towards \eqref{eq:Functional approximation to the Laplace-Beltrami operator} as $n\rightarrow \infty$ and \eqref{eq:Functional approximation to the Laplace-Beltrami operator} converges towards \eqref{eq:laplace-beltrami} as $t\rightarrow 0$. In particular, this second step is given by the following:
\begin{prop}
[\citet{belkin2005towards}, Proposition 4.4]
Let $\mathcal{M}$ be a $k$-dimensional compact smooth manifold embedded in some Euclidean space $\mathbb{R}^N$, and fix $y\in\mathcal{M}$. Let $f\in\mathcal{C}_\infty(\mathcal{M)}$. Then
\begin{equation}
\frac{1}{t}\frac{1}{(4\pi t)^{k/2}} L^tf(y) \xrightarrow{t\to 0 } \frac{1}{\text{vol}(\mathcal M)}\Delta_{\mathcal M}f(y).
\end{equation}
\label{prop:3}
\end{prop}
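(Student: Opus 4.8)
The plan is to treat this as an instance of Laplace's method (a Watson's-lemma / steepest-descent argument) adapted to a Riemannian manifold, exploiting that the Gaussian weight $e^{-\norm{x-y}^2/(4t)}$ concentrates on an $\bO(\sqrt{t})$-neighborhood of $y$ as $t\to 0$. First I would localize: split the integral defining $L^t f(y)$ in \eqref{eq:Functional approximation to the Laplace-Beltrami operator} into the part over a fixed geodesic ball $B_\delta(y)$ and the complementary tail. On the tail one has $\norm{x-y}\geq\delta$, so the weight is bounded by $e^{-\delta^2/(4t)}$; since $f\in\mathcal{C}_\infty(\mathcal M)$ and $\mathcal M$ is compact, $|f(y)-f(x)|$ is bounded, and the tail contributes a quantity decaying faster than any power of $t$. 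Hence after multiplying by the prefactor $t^{-1}(4\pi t)^{-k/2}$ the tail is negligible, and only the behavior of $f$ near $y$ matters.

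Next I would pass to geodesic normal coordinates $u\in\R^k$ centered at $y$ via the exponential map $x=\exp_y(u)$, reducing the integral over $B_\delta(y)$ to one over a Euclidean ball in $\R^k$. Three local expansions are required: the Taylor expansion $f(x)-f(y)=\langle\nabla f(y),u\rangle+\tfrac12\operatorname{Hess}f(y)[u,u]+\bO(\norm{u}^3)$; the Riemannian volume element $dV=(1+\bO(\norm{u}^2))\,du$; and the relation between the ambient chordal distance and the intrinsic distance $r=\norm{u}$, namely $\norm{x-y}^2=r^2+\bO(r^4)$, where the $\bO(r^4)$ term encodes the extrinsic (second-fundamental-form) curvature. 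Substituting these into $L^t f(y)$ and extending the domain of integration back to all of $\R^k$ (again at the cost of an exponentially small error) turns the leading contribution into a genuine Gaussian integral.

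I would then evaluate the Gaussian integral term by term. The linear term $\langle\nabla f(y),u\rangle$ integrates to zero by oddness in $u$, as do all odd-order contributions. The quadratic term survives: using the moment identity $\int_{\R^k} u_i u_j\, e^{-\norm{u}^2/(4t)}\,du=(4\pi t)^{k/2}\,2t\,\delta_{ij}$ together with the convention $\Delta_{\mathcal M}f(y)=-\operatorname{div}(\nabla f)=-\operatorname{tr}\operatorname{Hess}f(y)$ valid at the origin of normal coordinates, the quadratic part of $\int(f(y)-f(x))\,d\mu(x)$ equals $(4\pi t)^{k/2}\,t\,\tfrac{1}{\operatorname{vol}(\mathcal M)}\Delta_{\mathcal M}f(y)$ at leading order. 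Dividing by $t(4\pi t)^{k/2}$ recovers exactly the claimed limit.

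The main obstacle I anticipate is controlling the correction terms after the aggressive $t^{-(1+k/2)}$ rescaling, where every correction must vanish as $t\to 0$. The most delicate is the discrepancy between the ambient distance in the kernel and the geodesic distance: writing $e^{-\norm{x-y}^2/(4t)}=e^{-r^2/(4t)}\,e^{\bO(r^4)/(4t)}$ and using that the Gaussian lives at scale $r\sim\sqrt t$, one has $r^4/t\sim t$, so this factor is $1+\bO(t)$ and its contribution is lower order; the volume correction $\bO(r^2)=\bO(t)$ and the quartic Taylor remainder $\bO(\norm{u}^4)$ (contributing $\bO(t^2)$ against the Gaussian, hence $\bO(t)$ after rescaling) are dispatched the same way. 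Making these heuristics rigorous amounts to bounding each remainder uniformly on $B_\delta(y)$ and invoking dominated convergence; this is precisely the analysis carried out in \citet{belkin2005towards}, which is independent of any sampling hypothesis and therefore transfers verbatim to the case $\mathcal M=\S^2$ with $k=2$.
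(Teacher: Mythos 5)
The paper contains no proof of this proposition: it is quoted, with attribution, as Proposition~4.4 of \citet{belkin2005towards} and used as a black box in the proof of Theorem~\ref{theo:pointwise convergence for a regular sampling} (in combination with Proposition~\ref{prop:2}). The only meaningful comparison is therefore with the argument in the cited reference, and your sketch is essentially that argument: Gaussian localization, geodesic normal coordinates, oddness cancellation of the gradient term, and the second-moment identity $\int_{\mathbb{R}^k} u_i u_j \, e^{-\norm{u}^2/(4t)}\,du = 2t\,(4\pi t)^{k/2}\,\delta_{ij}$, with the chordal-versus-geodesic, volume-density, and Taylor remainders all shown to be lower order after the $t^{-1}(4\pi t)^{-k/2}$ rescaling. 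Your normalizations are consistent with the paper's conventions: taking $\mu$ to be the uniform \emph{probability} measure is exactly what produces the $1/\operatorname{vol}(\mathcal{M})$ factor in the limit, and the sign works out because the paper uses the geometer's Laplacian $\Delta_{\mathcal{M}} = -\operatorname{div}\circ\nabla$, so that $-\operatorname{tr}\operatorname{Hess}f(y) = \Delta_{\mathcal{M}}f(y)$ at the center of normal coordinates. In a complete write-up only two points need more than routine care, both of which you flagged: (i) the tail estimate requires the ambient distance $\norm{x-y}$ (the kernel's argument) to be bounded below off a geodesic ball, which follows from compactness and injectivity of the embedding; and (ii) the cross terms --- the gradient term against the $\bO(\norm{u}^2)$ volume correction and against the kernel correction $e^{\bO(r^4)/(4t)}$ --- must be checked to vanish after rescaling, which your uniform-bound-plus-dominated-convergence remark covers. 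So the proposal is a correct reconstruction of the standard proof; it just should be presented as proving the imported result, not as an alternative to anything in this paper.
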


\paragraph{Building the proof.}
As the sphere is a compact smooth manifold embedded in $\mathbb{R}^3$, we can reuse proposition \ref{prop:3}. Thus, our strategy to prove Theorem \ref{theo:pointwise convergence for a regular sampling} is to (i) show that
\begin{equation}\label{eq:continuous convergence}
	\lim_{n\to\infty}L_n^{t} f(y) =  L^t(y)
\end{equation}
for a particular class of \emph{deterministic} samplings, and (ii) apply Proposition \ref{prop:3}.

We start by proving that for smooth functions, for any fixed $t$, the extended graph Laplacian $L^t_n$ converges towards its continuous counterpart $L^t$ as the sampling increases in size.

\begin{prop}\label{prop:1}
	For an equal area sampling $\{x_i\in\S^2\}_{i=1}^n: A_i=A_j \forall i,j$ of the sphere it is true that for all $f: \S^2 \rightarrow \R$ Lipschitz with respect to the Euclidean distance $\|\cdot\|$ with Lipschitz constant $C_f$
	\begin{equation*}
	\left| \int_{\S^2}f({ x})\text{d}{\mu(x)} - \frac{1}{n}\sum_i f( x_i)\right|\leq C_f d^{(n)}.
	\end{equation*}
	Furthermore, for all $y\in\S^2$ the Heat Kernel Graph Laplacian operator $L^t_n$ converges pointwise to the functional approximation of the Laplace Beltrami operator $L^t$
	\begin{equation*}
	 L_n^tf(y)\xrightarrow{n\to\infty} L^tf(y).
	\end{equation*}
\end{prop}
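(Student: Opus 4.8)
The plan is to deduce the second (convergence) claim directly from the first (quadrature) claim, applied not to $f$ but to the integrand defining $L^t$. I would start with the quadrature inequality. Since the sampling is equi-area, each patch $\sigma_i$ carries mass $\mu(\sigma_i) = A_i/|\S^2| = 1/n$ under the uniform probability measure, so I can write $\frac1n\sum_i f(x_i) = \sum_i \int_{\sigma_i} f(x_i)\,d\mu(x)$ and $\int_{\S^2} f\,d\mu = \sum_i \int_{\sigma_i} f(x)\,d\mu(x)$. Subtracting and applying the triangle inequality bounds the error by $\sum_i \int_{\sigma_i} \seminorm{f(x)-f(x_i)}\,d\mu(x)$. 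On $\sigma_i$ the Lipschitz property gives $\seminorm{f(x)-f(x_i)} \le C_f\norm{x-x_i} \le C_f d_i \le C_f d^{(n)}$, and since $\sum_i \mu(\sigma_i)=1$ the total is at most $C_f d^{(n)}$, which is the first inequality.

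For the convergence statement I would fix $y$ and $t>0$ and introduce $g_y(x) := e^{-\norm{x-y}^2/4t}\,(f(y)-f(x))$, so that $L_n^t f(y) = \frac1n\sum_i g_y(x_i)$ and $L^t f(y) = \int_{\S^2} g_y\,d\mu$. These are precisely the two quantities compared in the first part, now for $g_y$. Hence, provided $g_y$ is Lipschitz with a constant independent of $n$, the quadrature bound yields immediately $\seminorm{L_n^t f(y) - L^t f(y)} \le C_{g_y}\,d^{(n)}$.

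The step that needs genuine care — and which I expect to be the main obstacle — is estimating the Lipschitz constant of $g_y$ uniformly in $n$ (and in $y$). The argument is elementary: $g_y$ is a product of two bounded Lipschitz factors on the compact sphere. The Gaussian factor $\phi(x)=e^{-\norm{x-y}^2/4t}$ is bounded by $1$ and has $\R^3$-gradient of norm $\frac{\norm{x-y}}{2t}e^{-\norm{x-y}^2/4t}$, maximized at $\norm{x-y}=\sqrt{2t}$, giving Lipschitz constant at most $(2t)^{-1/2}e^{-1/2}$; the factor $f(y)-f(x)$ is Lipschitz with constant $C_f$ and bounded by $C_f\,\mathrm{diam}(\S^2)=2C_f$. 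The product rule for bounded Lipschitz functions then gives $C_{g_y} \le C_f\,(1 + \sqrt{2/t}\,e^{-1/2})$, a constant depending on $f$ and $t$ but not on $n$ or $y$.

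Combining the two parts yields $\seminorm{L_n^t f(y)-L^t f(y)} \le C_f\,(1+\sqrt{2/t}\,e^{-1/2})\,d^{(n)}$. Since $d^{(n)}\to 0$ as the sampling refines — guaranteed under the hypothesis $d^{(n)}\le C/n^\alpha$ of Theorem~\ref{theo:pointwise convergence for a regular sampling}, which I would invoke here even though the proposition states only the equi-area condition — the right-hand side tends to $0$, proving the pointwise convergence. Everything beyond the Lipschitz estimate for $g_y$ is bookkeeping with the equi-area quadrature.
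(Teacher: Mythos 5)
Your proof is correct and follows essentially the paper's own route: both deduce the second claim by applying the first (equi-area quadrature) bound to the Lipschitz integrand of $L^t f(y)$, the only difference being that the paper applies it separately to the two sums $\frac{1}{n}\sum_i e^{-\|x_i-y\|^2/4t}$ and $\frac{1}{n}\sum_i e^{-\|x_i-y\|^2/4t}f(x_i)$ and concludes by linearity, whereas you treat the combined integrand $g_y$ in one stroke and make explicit the product-rule Lipschitz estimate $C_{g_y}\le C_f\bigl(1+\sqrt{2/t}\,e^{-1/2}\bigr)$ --- precisely the computation the paper defers to the proof of its Proposition~\ref{prop:2}. Your remark that the equi-area hypothesis alone does not force $d^{(n)}\to 0$, so the convergence statement implicitly needs the decay assumption $d^{(n)}\le C/n^{\alpha}$ of Theorem~\ref{theo:pointwise convergence for a regular sampling}, is accurate; the paper leaves that assumption implicit as well.
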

\begin{proof}

	Assuming $f:\mathbb S^2 \rightarrow \R$ is Lipschitz with Lipschitz constant $C_f$, we have
	\begin{equation*}
	\left| \int_{\sigma_{i}}f({ x})\text{d}{\mu(x)} - \frac{1}{n}f( x_i)\right| \leq C_fd^{(n)}\frac{1}{n},
	\end{equation*}
	where $\sigma_i\subset \S^2$ is the subset of the sphere corresponding to the patch around $x_i$. Remember that the sampling is equal area.
	Hence, using the triangular inequality and summing all the contributions of the $n$ patches, we obtain
	\begin{equation*}
	\left| \int_{\S^2}f({ x})\text{d}{\mu(x)} - \frac{1}{n}\sum_i f( x_i)\right| \leq \sum_i \left| \frac{1}{4\pi^2} \int_{\sigma_{i}}f({ x})\text{d}{\mu(x)} - \frac{1}{n}f( x_i)\right|\leq n  C_fd^{(n)}\frac{1}{n} = C_fd^{(n)}
	\end{equation*}
	A direct application of this result leads to the following pointwise convergences
	\begin{equation*}
	\forall f \text{ Lipschitz,}\quad \forall y\in\S^2,  \quad\quad \frac{1}{n}\sum_i e^{-\frac{\|x_i-y\|^2}{4t}}\rightarrow   \int e^{-\frac{\|x-y\|^2}{4t}}d\mu(x)
	\end{equation*}
    \begin{equation*}
    \forall f \text{ Lipschitz,}\quad \forall y\in\S^2,  \quad\quad \frac{1}{n}\sum_i e^{-\frac{||x_i-y||^2}{4t}}f(x_i)\rightarrow   \int e^{-\frac{\|x-y\|^2}{4t}}f(x)d\mu(x)
    \end{equation*}
	Definitions \ref{eq:Heat Kernel Graph Laplacian operator} and \ref{eq:Functional approximation to the Laplace-Beltrami operator} end the proof.
\end{proof}

The last proposition show that for a \emph{fixed} $t$, $L_n^tf(x)\rightarrow \linefrac{1}{4\pi^2} L^tf(x)$. To utilize Proposition \ref{prop:3} and complete the proof, we need to find a sequence of $t_n$ for which this holds as $t_n \rightarrow 0$. Furthermore this should hold with a faster decay than $\frac{1}{4\pi t_n^2}$.
\begin{prop}\label{prop:2}
	Given a sampling regular enough, i.e., for which we assume $A_i=A_j \ \forall i,j\text{ and }d^{(n)}\leq \frac{C}{n^\alpha}, \ \alpha\in (0,\linefrac{1}{2}]$, a Lipschitz function $f$ and a point $y\in\S^2$ there exists a sequence $t_n = n^\beta, \beta<0$ such that
\begin{equation*}
    \forall f \text{ Lipschitz, } \forall x\in\S^2 \quad \left|\frac{1}{4\pi t_n^2}\left(L_n^{t_n}f(x) -  L^{t_n}f(x)\right)\right|\xrightarrow{n\to \infty}0.
\end{equation*}
\end{prop}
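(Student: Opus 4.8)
The plan is to read the quantity $L_n^{t_n} f(x) - L^{t_n} f(x)$ as a single quadrature error and feed it straight into Proposition~\ref{prop:1}; the only genuinely new work is to control how the relevant Lipschitz constant degrades as $t \to 0$. Fix the evaluation point $x \in \S^2$ and, for each $t > 0$, introduce the auxiliary function
\begin{equation*}
	g_t(z) := e^{-\norm{z - x}^2 / (4t)}\,\big(f(x) - f(z)\big), \qquad z \in \S^2 .
\end{equation*}
By the definitions \eqref{eq:Heat Kernel Graph Laplacian operator} and \eqref{eq:Functional approximation to the Laplace-Beltrami operator}, one has $L_n^t f(x) = \frac{1}{n}\sum_i g_t(x_i)$ and $L^t f(x) = \int_{\S^2} g_t\, d\mu$, so their difference is exactly the sampling error $\frac{1}{n}\sum_i g_t(x_i) - \int_{\S^2} g_t\, d\mu$ to which Proposition~\ref{prop:1} applies. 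This immediately gives $\seminorm{L_n^t f(x) - L^t f(x)} \le C_{g_t}\, d^{(n)}$, where $C_{g_t}$ is a Lipschitz constant of $g_t$ with respect to the Euclidean distance.

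The heart of the argument is to bound $C_{g_t}$ while tracking its $t$-dependence. Writing $g_t = u \cdot v$ with $u(z) = e^{-\norm{z-x}^2/(4t)}$ and $v(z) = f(x) - f(z)$, a product rule gives $C_{g_t} \le \norm{u}_\infty C_f + C_f\,\sup_z \big(\norm{z - x}\,\norm{\nabla u(z)}\big)$, using $\norm{u}_\infty \le 1$, that $v$ is $C_f$-Lipschitz, and the bound $\seminorm{v(z)} \le C_f \norm{z - x}$ (up to an absorbed dimensional constant relating geodesic and chordal distances). Since $\norm{\nabla u(z)} = \frac{\norm{z-x}}{2t} e^{-\norm{z-x}^2/(4t)}$, the crucial quantity is $\norm{z-x}\,\norm{\nabla u(z)} = \frac{\norm{z-x}^2}{2t} e^{-\norm{z-x}^2/(4t)}$, which, as a function of $s = \norm{z-x}^2/(4t) \ge 0$, equals $2s\,e^{-s}$ and is therefore bounded by $2/e$ uniformly in $t$. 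The key phenomenon is thus that the Gaussian's blow-up near $x$ is exactly cancelled by the linear vanishing of $f(x) - f(z)$ there, so that $C_{g_t} \le C'$ with $C'$ independent of $t$. (A cruder bound $C_{g_t} = \bO(t^{-1/2})$ would also work and only shifts the admissible range of $\beta$ below.)

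With this in hand the conclusion is a matter of balancing exponents. Combining the two estimates with the hypotheses $d^{(n)} \le C n^{-\alpha}$ and $t_n = n^\beta$,
\begin{equation*}
	\seminorm{\frac{1}{4\pi t_n^2}\big(L_n^{t_n} f(x) - L^{t_n} f(x)\big)} \le \frac{C'\, d^{(n)}}{4\pi t_n^2} \le \frac{C' C}{4\pi}\, n^{-\alpha - 2\beta}.
\end{equation*}
This tends to $0$ as soon as $-\alpha - 2\beta < 0$, i.e.\ $\beta > -\alpha/2$; since we also need $\beta < 0$ (so that $t_n \to 0$, as required to invoke Proposition~\ref{prop:3} afterwards), any $\beta \in (-\alpha/2, 0)$ works, and this interval is nonempty precisely because $\alpha > 0$. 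Choosing for instance $\beta = -\alpha/4$ completes the proof.

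The routine part is recognizing the difference as a quadrature error and reusing Proposition~\ref{prop:1} (which is where the equi-area assumption is consumed). The main obstacle is the middle step: a naive estimate would let the Lipschitz constant of the integrand grow like $t^{-1/2}$, or worse, as the kernel concentrates, and one must verify that the factor $f(x) - f(z)$, vanishing linearly at the kernel's center, tames this growth enough that $\frac{1}{4\pi t_n^2}$ does not overwhelm the quadrature gain. Only then does the residual exponent in $n^{-\alpha - 2\beta}$ — where the hypothesis $d^{(n)} \le C n^{-\alpha}$ finally enters — stay negative for a suitable $\beta$, forcing $\beta$ into the narrow window $(-\alpha/2, 0)$ that the statement promises.
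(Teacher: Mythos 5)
Your proof is correct, and while it shares the paper's skeleton --- recognize $L_n^{t}f(x) - L^{t}f(x)$ as a quadrature error, feed it into Proposition~\ref{prop:1}, then balance exponents against the rescaling $(4\pi t^2)^{-1}$ --- it diverges at the one step that matters, and in a way that strengthens the result. The paper bounds the Lipschitz constant of the integrand $\phi^t(\cdot;y)$ by a crude product rule, pairing $\norm{f}_\infty$ with the kernel's Lipschitz constant $(2et)^{-1/2}$, so that $C_{\phi^t_y} = \bO(t^{-1/2})$; after rescaling this produces two terms, $n^{-\alpha}t^{-5/2}$ and $n^{-\alpha}t^{-2}$, and the admissible window $\beta \in (-2\alpha/5,\,0)$. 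You instead exploit the cancellation at the kernel's center: pairing $\seminorm{f(x)-f(z)} \le C_f \norm{z-x}$ with $\norm{\nabla u(z)} = \frac{\norm{z-x}}{2t}e^{-\norm{z-x}^2/(4t)}$ \emph{at the same point} $z$ gives $C_f\,2se^{-s} \le 2C_f/e$ with $s = \norm{z-x}^2/(4t)$, uniformly in $t$, hence $C_{g_t} = \bO(1)$. This collapses the error to the single term $n^{-\alpha-2\beta}$ and widens the window to $\beta \in (-\alpha/2,\,0)$; for $\alpha = \linefrac{1}{2}$ this is exactly the range $(-\linefrac{1}{4}, 0)$ that \citet{belkin2005towards} obtain in the random-sampling case and that the appendix quotes for comparison, so your deterministic bound matches the probabilistic one rather than falling short of it as the paper's does. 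Since the proposition only asserts existence of some admissible $\beta < 0$, both arguments suffice, but yours is strictly sharper. One point to tighten: because $f$ is merely Lipschitz, the ``product rule'' for $C_{g_t}$ should be justified by integrating $\frac{d}{ds}\bigl[u(\gamma(s))v(\gamma(s))\bigr]$ along geodesics (or by invoking Rademacher), which is what guarantees that $\seminorm{v(z)}$ and $\norm{\nabla u(z)}$ are evaluated at the same point; your sup formula encodes this correctly, and the geodesic-versus-chordal constant you absorb is harmless, but the justification deserves a sentence.
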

\begin{proof}
To ease the notation, we define
\begin{align}
	K^t(x,y) &:=  e^{-\frac{\|x-y\|^2}{4t}}\\
	\phi^t(x;y) &:= e^{-\frac{\|x-y\|^2}{4t}}\left(f(y)-f(x)\right).
\end{align}
We start with the following inequality
\begin{align}
	\|L_n^tf-L^tf\|_\infty &= \max _{y\in \S^2} \left|L_n^tf(y)-L^tf(y)\right| \nonumber\\
	&= \max _{y\in \S^2} \left| \frac{1}{n} \sum_{i=1}^n \phi^t(x_i; y)- \int_{\S^2} \phi^t(x;y)d\mu(x) \right| \nonumber\\
	&\leq \max _{y\in \S^2}  \sum_{i=1}^n   \left| \frac{1}{n}  \phi^t(x_i; y)- \int_{\sigma_i} \phi^t(x;y)d\mu(x) \right| \nonumber\\
	&\leq  d^{(n)} \max _{y\in \S^2} C_{\phi^t_y} , \label{eq:prop3-base-ineq}
\end{align}
where $C_{\phi^t_y}$ is the Lipschitz constant of $x \rightarrow \phi^t(x, y)$ and the last inequality follows from Proposition \ref{prop:1}.
Using the assumption $d^{(n)}\leq \frac{C}{\sqrt{n}}$ we find
\begin{equation*}
\|L_n^tf-L^tf\|_\infty  \leq  \frac{C}{\sqrt{n}} \max _{y\in \S^2}  C_{\phi^t_y}
\end{equation*}
We now find the explicit dependence between $t$ and $C_{\phi^t_y}$
\begin{align*}
	C_{\phi^t_y} &= \|\partial_x\phi^t(\cdot;y)\|_\infty\\&
	= \|\partial_x\left(K^t(\cdot;y)f\right)\|_\infty\\&
	= \|\partial_x K^t(\cdot;y)f + K^t(\cdot;y)\partial_x f||_\infty\\&
	\leq \|\partial_x K^t(\cdot;y)f\|_\infty + \|K^t(\cdot;y)\partial_x f\|_\infty\\&
	\leq  \|\partial_x K^t(\cdot;y)\|_\infty\|f\|_\infty + \|K^t(\cdot;y)\|_\infty\|\partial_x f\|_\infty\\&
	= \|\partial_x K^t(\cdot;y)\|_\infty\|f\|_\infty + \|\partial_x f\|_\infty\\&
	= C_{K^t_y} \|f\|_\infty + \|\partial_xf\|_\infty\\&
	= C_{K^t_y} \|f\|_\infty + C_f
\end{align*}
where $C_{K^t_y}$ is the Lipschitz constant of the function $x\rightarrow K^t(x;y)$. We note that this constant does not depend on $y$:
\begin{equation*}
C_{K^t_y} = \norm{\partial_x e^{-\frac{x^2}{4t}}}_\infty = \norm{\frac{x}{2t}e^{-\frac{x^2}{4t}}}_\infty = \left. \frac{x}{2t}e^{-\frac{x^2}{4t}}\right|_{x=\sqrt{2t}}=(2et)^{-\frac{1}{2}}\propto t ^ {-\frac{1}{2}}.
\end{equation*}
Hence we have
\begin{align*}
	\frac{C}{\sqrt{n}}  \max _{y\in \S^2} C_{\phi^t_y}
	&\leq  \frac{C}{\sqrt{n}} \left( (2et)^{-\frac{1}{2}} \norm{f}_\infty + C_f \right)\\
	&\leq \frac{C \norm{f}_\infty}{n^\alpha(2et)^{1/2}} +   \frac{C}{n^\alpha} C_f.
\end{align*}
Inculding this result in \eqref{eq:prop3-base-ineq} and rescaling by $\linefrac{1}{4\pi t^2}$, we obtain
\begin{align*}
	\norm{\frac{1}{4\pi t^2}\left(L_n^tf-L^tf\right)}_\infty&\leq \frac{1}{4\pi t^2}\norm{\left(L_n^tf-L^tf\right)}_\infty \\
	&\leq \frac{C}{4\pi}\left[\frac{\norm{f}_\infty}{\sqrt{2e}}\frac{1}{n^\alpha t^{5/2}} + \frac{C_f}{n^\alpha t^2}\right].
\end{align*}
In order for $ \frac{C}{4\pi}\left[\frac{\norm{f}_\infty}{\sqrt{2e}}\frac{1}{n^\alpha t^{5/2}} + \frac{C_f}{n^\alpha t^2}\right] \xrightarrow[t\to 0 ]{n\to\infty}0$,
we need $\begin{cases}
n^\alpha t^{5/2} \rightarrow \infty\\
n^\alpha t^2 \rightarrow \infty
\end{cases}$ \\
It happens if $\begin{cases}
t(n) = n^\beta, &\beta\in(-\frac{2\alpha}{5}, 0) \\
t(n) = n^\beta, &\beta\in(-\frac{\alpha}{2}, 0)
\end{cases} \implies t(n) = n^\beta, \quad \beta\in(-\frac{2\alpha}{5}, 0)$.\\
Indeed, we have\\
$n^alpha t^{5/2}=n^{5/2\beta+\alpha}\xrightarrow{n \to \infty} \infty$ since $\frac{5}{2}\beta+\alpha>0 \iff \beta>-\frac{2\alpha}{5}$\\
and $n^\alpha t^2=n^{2\beta+\alpha}\xrightarrow {n \to \infty} \infty$ since $2\beta+\alpha>0 \iff \beta>-\frac{\alpha}{2}$.\\
As a result, for $t=n^\beta$ with $\beta\in(-\frac{1}{5}, 0)$ we have
$\begin{cases}
(t_n)\xrightarrow{n\to\infty}0\\
\norm{\frac{1}{4\pi t_n^2}L_n^{t_n}f-\frac{1}{4\pi t_n^2}L^{t_n}f}_\infty  \xrightarrow{n\to\infty}0,
\end{cases}$\\
which concludes the proof.
\end{proof}

Theorem \ref{theo:pointwise convergence for a regular sampling}, is then an immediate consequence of Proposition \ref{prop:2} and \ref{prop:3}.
\begin{proof}[Proof of Theorem \ref{theo:pointwise convergence for a regular sampling}]
	Thanks to Proposition \ref{prop:2} and Proposition \ref{prop:3}	we conclude that $\forall y\in\S^2 $
	\begin{equation*}
	\lim_{n\to\infty}\frac{1}{4\pi t_n^2} L_n^{t_n}f(y) =  \lim_{n\to\infty}\frac{1}{4\pi t_n^2} L^{t_n}f(y) = \frac{1}{|\S^2|}\laplacian f(y)
	\end{equation*}
\end{proof}



In \citep{belkin2005towards}, the sampling is drawn from a uniform random distribution on the sphere, and their proof heavily relies on the uniformity properties of the distribution from which the sampling is drawn. In our case the sampling is deterministic, and this is indeed a problem that we need to overcome by imposing the regularity conditions above.

To conclude, we see that the result obtained is of similar form than the result obtained in \citep{belkin2005towards}. Given the kernel density $t(n)=n^\beta$, \citet{belkin2005towards} proved convergence in the random case for $\beta \in (-\frac{1}{4}, 0)$ and we proved convergence in the deterministic case for $\beta \in (-\frac{2\alpha}{5}, 0)$, where $\alpha \in (0, \linefrac{1}{2}]$ (for the spherical manifold).

\section{Proof of Theorem~\ref{theo:equivariance}}
\begin{proof}
Fix $x\in \mathbb S^2$. Since any rotation $\Rg$ is an isometry, and the Laplacian $\Delta$ commutes with all isometries of a Riemanniann manifold, and defining $\Rg f =: f'$ for ease of notation, we can write that
	\begin{align*}
			\seminorm{\Rg \Ln f (x) - \Ln \Rg f(x) } &\leq \seminorm{\Rg \Ln f (x) - \Rg \laplacian f(x)} + \seminorm{\Rg \laplacian f (x) - \Ln \Rg f(x)} =\\
			& =\seminorm{\Rg (\Ln f - \laplacian f ) (x)} + \seminorm{ \laplacian f' (x) - \Ln f'(x)} \leq \\
			& \leq \seminorm{(\Ln f - \laplacian f ) (g^{-1}(x))} + \seminorm{ \laplacian f' (x) - \Ln f'(x)}
	\end{align*}

Since $g^{-1}(x)\in\mathbb{S}^2$ and $f'$ still satisfies hypothesis, we can apply theorem \ref{theo:pointwise convergence for a regular sampling} to say that
\begin{align*}
 &\seminorm{(\Ln f - \laplacian f ) (g^{-1}(x))} \xrightarrow{n\to\infty}0\\
  &\seminorm{ \laplacian f' (x) - \Ln f'(x)}\xrightarrow{n\to\infty}0
\end{align*}
to conclude that $$\forall x\in\mathbb S^2 \quad\seminorm{\Rg \Ln f (x) - \Ln \Rg f(x) } \xrightarrow{n\to\infty}0$$

\end{proof}

\section{Experimental details}

\subsection{3D objects recognition}

\Tabref{shrec17_retrieval} shows the results obtained from the SHREC'17 competition's official evaluation script.

\begin{table}
    \centering
	\scriptsize
    \begin{tabular}{l cccc cccc}
    & \multicolumn{4}{c}{micro (label average)} & \multicolumn{4}{c}{macro (instance average)} \\
	\cmidrule(lr){2-5} \cmidrule(lr){6-9}
    & P@N & R@N & F1@N & mAP & P@N & R@N & F1@N & mAP \\
	\toprule
    \citet{cohen2018sphericalcnn} ($b=128$) & 0.701 & 0.711 & 0.699 & 0.676 & - & - & - & - \\
    \citet{cohen2018sphericalcnn} (simplified, $b=64$) & 0.704 & 0.701 & 0.696 & 0.665 & 0.430 & 0.480 & 0.429 & 0.385 \\
    \citet{esteves2018sphericalcnn} ($b=64$) & 0.717 & 0.737 & - & 0.685 & 0.450 & 0.550 & - & 0.444 \\
    DeepSphere (equiangular $b=64$) & 0.709 & 0.700 & 0.698 & 0.665 & 0.439 & 0.489 & 0.439 & 0.403 \\
    DeepSphere (HEALPix $N_{side}=32$) & 0.725 & 0.717 & 0.715 & 0.686 & 0.475 & 0.508 & 0.468 & 0.428\\
	\bottomrule
    \end{tabular}
    \caption{Official metrics from the SHREC'17 object retrieval competition.}
    \label{tab:shrec17_retrieval}
\end{table}


\begin{dmath}
    [GC_{16}\, +\, BN\, +\, ReLU]_{nside32}\, +\, \textrm{Pool}\, +\, [GC_{32}\, +\, BN\, +\, ReLU]_{nside16}\, +\, \textrm{Pool}\, +\, [GC_{64}\, +\, BN\, +\, ReLU]_{nside8}\, +\, \textrm{Pool}\, +\, [GC_{128}\, +\, BN\, +\, ReLU]_{nside4}\, +\,\textrm{Pool}\, +\, [GC_{256}\, +\, BN\, +\, ReLU]_{nside2}\, +\, \textrm{Pool}\, +\, GAP\, +\, FCN\, +\, \textrm{softmax}
\end{dmath}

\subsection{Cosmological model classification}

\begin{dmath}
    [GC_{16}\, +\, BN\, +\, ReLU]_{nside1024}\, +\, \textrm{Pool}\, +\, [GC_{32}\, +\, BN\, +\, ReLU]_{nside512}\, +\, \textrm{Pool}\, +\, [GC_{64}\, +\, BN\, +\, ReLU]_{nside256}\, +\, \textrm{Pool}\, +\, [GC_{64}\, +\, BN\, +\, ReLU]_{nside128}\, +\,\textrm{Pool}\, +\, [GC_{64}\, +\, BN\, +\, ReLU]_{nside64}\, +\, \textrm{Pool}\, +\, [GC_{2}]_{nside32}\, +\, GAP\, +\, \textrm{softmax}
\end{dmath}

\subsection{Climate event segmentation} \label{sec:climate:appendix}

\Tabref{climate:accuracy}, \ref{tab:climate:map}, and \ref{tab:climate:speed} show the accuracy, mAP, and efficiency of all the NNs we ran.

The experiment with the model from \citet{jiang2019sphericalcnn} was rerun in order to obtain the AP metrics, but with a batch size of 64 instead of 256 due to GPU memory limit.

Several experiments were run with different architectures for DeepSphere (DS). Jiang architecture use a similar one as \citet{jiang2019sphericalcnn}, with only the convolutional operators replaced. DeepSphere only is the original architecture giving the best results, deeper and with four times more feature maps than Jiang architecture. And the wider architecture is the same as the previous one with two times the number of feature maps.

Regarding the weighted loss, the weights are chosen with \texttt{scikit-learn} function \texttt{compute\_class\_weight} on the training set.



\begin{table}
    \centering
	\begin{tabular}{l l l l l}
		\toprule
        & TC & AR & BG & mean \\
		\midrule
		\citet{mudigonda2017climateevents} & 74 & 65 & 97 & 78.67 \\
		\citet{jiang2019sphericalcnn} (paper) & 94 & 93 & 97 & 94.67 \\
		\citet{jiang2019sphericalcnn} (rerun) & 93.9 & 95.7 & 95.2 & 94.95 \\
        \citet{cohen2019gauge} (S2R) & 97.8 & 97.3 & 97.3 & 97.5 \\
        \citet{cohen2019gauge} (R2R) & 97.9 & 97.8 & 97.4 & 97.7 \\
		\midrule
		DS (Jiang architecture, weighted loss) & 97.1 & 97.6 & 96.5 & 97.1 \\
        DS (weighted loss) & $97.4\pm 1.1$ & $97.7\pm 0.7$ & $98.2\pm 0.5$ & $97.8\pm 0.3$ \\
		DS (wider architecture, weighted loss) & 91.5 & 93.4 & 99.0 & 94.6 \\
		\midrule
        DS (Jiang architecture, non-weighted loss) & 33.6 & 93.6 & 99.3 & 75.5 \\
        DS (non-weighted loss) & $69.2\pm 3.7$ & $94.5\pm 2.9$ & $99.7\pm 0.1$ & $87.8\pm 0.5$ \\
        DS (wider architecture, non-weighted loss) & 73.4 & 92.7 & 99.8 & 88.7 \\
		\bottomrule
    \end{tabular}
    \caption{
		Results on climate event segmentation: accuracy.
		Tropical cyclones (TC) and atmospheric rivers (AR) are the two positive classes, against the background (BG).
		\citet{mudigonda2017climateevents} is not directly comparable as they don't use the same input feature maps.
		Note that a non-weighted cross-entropy loss is not optimal for the accuracy metric.
		\label{tab:climate:accuracy}
	}
\end{table}

\begin{table}
	\centering
	\begin{tabular}{l l l l}
		\toprule
        & TC & AR & mean \\
		\midrule
		\citet{jiang2019sphericalcnn} (rerun) & 11.08 & 65.21 & 38.41 \\
        \citet{cohen2019gauge} (S2R) & - & -& 68.6 \\
        \citet{cohen2019gauge} (R2R) & - & -& 75.9 \\
		\midrule
        DS (Jiang architecture, non-weighted loss) & 46.2 & 93.9 & 70.0 \\
        DS (non-weighted loss) & $80.86\pm 2.42$ & $97.45\pm 0.38$ & $89.16\pm 1.37$ \\
        DS (wider architecture, non-weighted loss) & 84.71 & 98.05 & 91.38 \\
		\midrule
        DS (Jiang architecture, weighted loss) & 49.7 & 89.2 & 69.5 \\
        DS (weighted loss) & $58.88\pm 3.17$ & $95.41\pm 1.51$ & $77.15\pm 1.94$ \\
        DS (wider architecture, weighted loss) & 52.80 & 94.78 & 73.79 \\
		\bottomrule
    \end{tabular}
    \caption{
		Results on climate event segmentation: average precision.
		Tropical cyclones (TC) and atmospheric rivers (AR) are the two positive classes.
		Note that a weighted cross-entropy loss is not optimal for the average precision metric.
	}
		\label{tab:climate:map}
\end{table}

\begin{table}
	\centering
	\begin{tabular}{l r r r}
		\toprule
		& \multicolumn{1}{c}{size} & \multicolumn{2}{c}{speed} \\
        \cmidrule(lr){2-2} \cmidrule(lr){3-4}
		& params & inference & training \\
		\midrule
		\citet{jiang2019sphericalcnn} & 330\,k & 10\,ms & 10\,h \\ 
		DeepSphere (Jiang architecture) & 590\,k & 5\,ms & 3\,h \\ 
        DeepSphere & 13\,M & 33\,ms & 13\,h \\ 
		DeepSphere (wider architecture) & 52\,M & 50\,ms & 20\,h \\
		\bottomrule
    \end{tabular}
    \caption{
		Results on climate event segmentation: size and speed.
	}
		\label{tab:climate:speed}
\end{table}
\pagebreak

DeepSphere with Jiang architecture\\
encoder:\\
\begin{dmath}
    [GC_{8}\, +\, BN\, +\, ReLU]_{L5}\,+\,\textrm{Pool}\,+\, [GC_{16}\, +\, BN\, +\, ReLU]_{L4}\, +\, \textrm{Pool}\, +\, [GC_{32}\, +\, BN\, +\, ReLU]_{L3}\, +\, \textrm{Pool}\, +\, [GC_{64}\, +\, BN\, +\, ReLU]_{L2}\, +\,\textrm{Pool}\, +\, [GC_{128}\, +\, BN\, +\, ReLU]_{L1}\, +\, \textrm{Pool}\,  +\, [GC_{128}\, +\, BN\, +\, ReLU]_{L0}
\end{dmath}
decoder:\\
\begin{dmath}
    \textrm{Unpool}\, +\,[GC_{128}\, +\, BN\, +\, ReLU]_{L1}\, +\, \textrm{concat}\, +\, [GC_{128}\, +\, BN\, +\, ReLU]_{L1}\, +\, \textrm{Unpool}\, +\, [GC_{64}\, +\, BN\, +\, ReLU]_{L2}\, +\, \textrm{concat}\, +\, [GC_{64}\, +\, BN\, +\, ReLU]_{L2}\, +\, \textrm{Unpool}\, +\, [GC_{32}\, +\, BN\, +\, ReLU]_{L3}\, +\, \textrm{concat}\, +\, [GC_{32}\, +\, BN\, +\, ReLU]_{L3}\, +\,\textrm{Unpool}\, +\, [GC_{16}\, +\, BN\, +\, ReLU]_{L4}\,+\, \textrm{concat}\, +\, [GC_{16}\, +\, BN\, +\, ReLU]_{L4}\, +\,\textrm{Unpool}\,  +\, [GC_{8}\, +\, BN\, +\, ReLU]_{L5}\,+\,\textrm{concat}\, +\, [GC_{8}\, +\, BN\, +\, ReLU]_{L5}\, + \,[GC_3]_{L5}
\end{dmath}
Concat is the operation that concatenate the results of the corresponding encoder layer.


Original DeepSphere architecture with encoder decoder\\
encoder:\\
\begin{dmath}
    [GC_{32}\, +\, BN\, +\, ReLU]_{L5}\,+\, [GC_{64}\, +\, BN\, +\, ReLU]_{L5}\, +\, \textrm{Pool}\, +\, [GC_{128}\, +\, BN\, +\, ReLU]_{L4}\, +\, \textrm{Pool}\, +\, [GC_{256}\, +\, BN\, +\, ReLU]_{L3}\, +\,\textrm{Pool}\, +\, [GC_{512}\, +\, BN\, +\, ReLU]_{L2} +\,\textrm{Pool}\, +\, [GC_{512}\, +\, BN\, +\, ReLU]_{L1} +\,\textrm{Pool}\, +\, [GC_{512}]_{L0}
\end{dmath}
decoder:\\
\begin{dmath}
    \textrm{Unpool}\, +\,[GC_{512}\, +\, BN\, +\, ReLU]_{L1}\, +\, \textrm{concat}\, +\, [GC_{512}\, +\, BN\, +\, ReLU]_{L1}\, +\, \textrm{Unpool}\, +\, [GC_{256}\, +\, BN\, +\, ReLU]_{L2}\, +\, \textrm{concat}\, +\, [GC_{256}\, +\, BN\, +\, ReLU]_{L2}\, +\, \textrm{Unpool}\, +\, [GC_{128}\, +\, BN\, +\, ReLU]_{L3}\, +\, \textrm{concat}\, +\, [GC_{128}\, +\, BN\, +\, ReLU]_{L3}\, +\,\textrm{Unpool}\, +\, [GC_{64}\, +\, BN\, +\, ReLU]_{L4}\,+\, \textrm{concat}\, +\, [GC_{64}\, +\, BN\, +\, ReLU]_{L4}\, +\,\textrm{Unpool}\,  +\, [GC_{32}\, +\, BN\, +\, ReLU]_{L5}\,+ \, [GC_3]_{L5}
\end{dmath}










\subsection{Uneven sampling}

Architecture for dense regression:
\begin{dmath}
    [GC_{50}\, +\, BN\, +\, ReLU]\, +\, [GC_{100}\, +\, BN\, +\, ReLU]\, +\, [GC_{100}\, +\, BN\, +\, ReLU]\, +\, [GC_{1}]
\end{dmath}

Architecture for global regression:
\begin{dmath}
    [GC_{50}\, +\, BN\, +\, ReLU]\, +\, [GC_{100}\, +\, BN\, +\, ReLU]\, +\, [GC_{100}\, +\, BN\, +\, ReLU]\, +\, GAP\, +\, FCN
\end{dmath}

\end{document}